\documentclass{ieeetj}
\usepackage{cite}
\usepackage{amsmath,amssymb,amsfonts}
\usepackage{algorithmic}
\usepackage{graphicx,color}
\usepackage{textcomp}
\usepackage{xcolor}
\usepackage[dvipsnames]{xcolor}
\usepackage{hyperref} 
\hypersetup{hidelinks=true}
\usepackage{algorithm,algorithmic}
\def\BibTeX{{\rm B\kern-.05em{\sc i\kern-.025em b}\kern-.08em
    T\kern-.1667em\lower.7ex\hbox{E}\kern-.125emX}}
\AtBeginDocument{\definecolor{tmlcncolor}{cmyk}{0.93,0.59,0.15,0.02}\definecolor{NavyBlue}{RGB}{0,86,125}}

\usepackage{amssymb}
\usepackage{amsmath}
\usepackage{graphicx}
\usepackage{amsthm}
\usepackage{rotating}
\usepackage{adjustbox}
\usepackage{caption}
\usepackage{subcaption}
\usepackage{bm}
\usepackage{siunitx}
\usepackage{mathtools}
\usepackage{braket}
\usepackage{booktabs}
\usepackage{xcolor}
\usepackage{diagbox, xcolor}
\usepackage{float}

\newcommand{\ve}[1]{\mathbf{#1}}

\newcommand{\sigmu}[1]{\lceil {#1}, \mu\rfloor}

\DeclareMathOperator{\sign}{sgn}
\DeclareMathOperator{\sig}{sig}
\DeclareMathOperator{\proj}{Proj}
\DeclareMathOperator{\atan2}{atan2}
\DeclareMathOperator{\wrap}{wrap}

\newtheorem{assumption}{Assumption}
\newtheorem{theorem}{Theorem}[section]
\newtheorem{remark}{Remark}
\newtheorem{lemma}[theorem]{Lemma}

\def\OJlogo{\vspace{-4pt}$<$Society logo(s) and publication title will appear here.$>$}
\def\seclogo{\vspace{10pt}$<$Society logo(s) and publication title will appear here.$>$}

\def\authorrefmark#1{\ensuremath{^{\textbf{#1}}}}

\begin{document}
\receiveddate{XX Month, XXXX}
\reviseddate{XX Month, XXXX}
\accepteddate{XX Month, XXXX}
\publisheddate{XX Month, XXXX}
\currentdate{XX Month, XXXX}
\doiinfo{XXXX.2022.1234567}

\markboth{}{Author {et al.}}

\title{An Adaptive Fixed-Time Line-of-Sight Guidance Scheme for 3D Path Following of Underwater Vehicles: Theory and Experiment}

\author{Hanzhi Yang \authorrefmark{1}, Jalil Chavez-Galaviz \authorrefmark{1},\\ and Nina Mahmoudian \authorrefmark{1}}
\affil{School of Mechanical Engineering, Purdue University, West Lafayette, IN 47907 USA}
\corresp{Corresponding author: Nina Mahmoudian (email: ninam@purdue.edu).}
\authornote{This work was supported in part by Office of Naval Research (ONR) under Grant No. N00014-24-1-2019. The authors also wish to thank Chabely Pecina Martinez for her valuable assistance during the field validation. }

\begin{abstract}
{Reliable path tracking is crucial for autonomous underwater vehicles (AUVs) operating in dynamic and uncertain marine environments. However, traditional line-of-sight (LOS) guidance methods rely on asymptotic convergence, resulting in slow disturbance recovery and unpredictable tracking performance. Existing robust control methods typically require modifications to the underlying vehicle controller, limiting their practical application on commercial AUV platforms. This paper proposes a robust fixed-time adaptive LOS guidance framework for 3D path tracking for AUVs. By combining fixed-time stability theory with LOS guidance, this method guarantees path tracking convergence within a preset time range, with the convergence time independent of initial conditions. Furthermore, a fixed-time adaptive estimator is developed to rapidly compensate for time-varying sideslip disturbances caused by ocean currents. A time-varying look-ahead mechanism is also introduced to improve tracking performance on curved paths. Lyapunov analysis proves the fixed-time stability of the proposed framework, and numerical simulations and physical experiments demonstrate that, compared to state-of-the-art adaptive LOS methods, this framework exhibits superior tracking accuracy, convergence speed, and anti-interference capability. In simulation, the time-varying look-ahead variant reduced cross-track and vertical-track RMSE by 69.37\% and 67.46\%, respectively, during curved-path tracking. In field experiments with an Iver 3 AUV, the proposed fixed-time guidance reduced average tracking error by 56.35\% in straight-path evaluation and 27.59\% in curved-path evaluation compared with conventional adaptive LOS guidance.The proposed method provides a practical guidance-level solution for achieving reliable autonomous navigation of AUVs in complex marine environments.}
\end{abstract}

\begin{IEEEkeywords}
Marine systems,
Guidance control,
Adaptive control,
Fixed-time control,
Kinematics
\end{IEEEkeywords}


\maketitle


\section{Introduction}

Autonomous underwater vehicles (AUVs) are increasingly used in critical marine missions such as ocean exploration, infrastructure inspection, and surveillance \cite{UUV_background}. The success of these missions largely depends on reliable guidance, navigation, and control (GNC) systems capable of maintaining precise tracks in uncertain and dynamic underwater environments. Unlike land robots, AUVs operate in environments subject to strong and time-varying disturbances, particularly ocean currents, which can significantly degrade their path-tracking performance. Therefore, achieving rapid and reliable disturbance recovery is a fundamental requirement for practical autonomous underwater operations.

Line-of-sight (LOS) guidance \cite{LOS0} remains one of the most widely adopted path-following strategies due to its simplicity and effectiveness. However, traditional LOS methods typically rely on asymptotic convergence, meaning that while the tracking error gradually decreases, convergence cannot be guaranteed within a predictable time frame. This limitation is particularly critical in time-sensitive robotic tasks, as convergence delays can lead to navigation failures, excessive energy consumption, or collision risks.

To improve robustness, existing studies have focused primarily on refining the inner-loop attitude controller that tracks LOS-generated commands. For example, sliding mode control \cite{Robust3}, extended state observers \cite{Robust2}, and prescribed performance control \cite{Robust1} have been integrated with LOS guidance to compensate for disturbances. However, many commercial AUVs, such as IVER \cite{UUV1}, Remus \cite{UUV2}, and SLOCUM \cite{UUV3}, are equipped with built-in attitude controllers that are inaccessible to end users. Therefore, modifying the low-level controller is often impractical, prompting the development of more robust and intelligent guidance strategies that can be implemented directly at the guidance layer.

Adaptive LOS (ALOS) and integral LOS (ILOS) methods address environmental perturbations by estimating or compensating for sideslip effects \cite{iLOS,LOS3,ALOS-2D,Fossen2024}. These methods achieve stable path following under sustained perturbations and have been extended to 3D trajectories. However, their primary reliance on asymptotic convergence limits their applicability to tasks requiring rapid maneuvers and predictable recovery after severe perturbations.

Fixed-time control (FxTC) theory offers a promising solution that ensures system convergence within a predetermined time regardless of initial conditions \cite{FxTC2,FxTC1}. This property is particularly important for autonomous robots operating in uncertain environments, as recovery time directly impacts mission safety. FxTC has been successfully applied to AUV motion control problems \cite{FxTCex3,FxTCex4,FxTCex5}, but its integration with LOS guidance remains limited. Existing fixed-time LOS (FxTLOS) studies mainly focus on 2D path following of surface vehicles or simplified AUV scenarios \cite{FxTLOS1,FxTLOS3,FxTLOS2}, lacking comprehensive analysis of perturbation robustness, experimental verification, and extensions to 3D path tracking, where coupled pitch and yaw dynamics introduce additional challenges.

To overcome these limitations, this paper proposes a robust adaptive fixed-time line-of-sight (AFxTLOS) guidance framework for 3D path tracking of AUVs. This method offers advantages such as fast convergence, fixed-time sideslip perturbation estimation, and compatibility with existing AUV motion controllers. Furthermore, for curved path tracking, this paper proposes an improved version incorporating time-varying look-ahead distance. The main contributions are summarized below:

{\begin{itemize}
    \item A novel FxTLOS guidance framework for 3D path following of AUVs with guaranteed bounded convergence, independent of initial conditions of the system, while remaining compatible with commercial AUV motion controllers. 
    \item A fixed-time sideslip estimator that compensates for unknown environmental disturbances to enhance the performance of the proposed control system in dynamic environments. 
    \item Extensive validation through simulation and field experiments demonstrating improved tracking accuracy, faster convergence, and better robustness relative to state-of-the-art adaptive LOS method. 
\end{itemize}
}

This paper is structured as follows. Section \ref{Preliminaries} provides the necessary notations and lemmas and elaborates the problem formulation. Section \ref{Method} proposed the adaptive robust FxTLOS and its variation with time-varying look-ahead distance for tracking curved paths. Section \ref{Result} validates the proposed methodology in both numerical simulations and physical experiments with a comparative study against the ALOS presented in \cite{Fossen2024}. Section \ref{Conclusion} concludes the paper.

\section{Preliminaries and Problem Formulation}\label{Preliminaries}
In this section, the notation and lemmas used in this paper, the AUV kinematic model then reformulated in the amplitude-phase form to simplify the controller design, and the corresponding path-following error dynamics are derived. 

\subsection{Notations}\label{notations}
The following notations are used in this paper. 
\begin{itemize}
    \item For the trigonometric functions in the matrices, this paper uses: $\mathrm{s}\bullet $ as $\sin{\bullet}$,  $ \mathrm{c} \bullet $  as $\cos{\bullet}$, $\mathrm{t}\bullet$ as $\tan{\bullet}$, and $\mathrm{sc} \bullet$ as $\sec{\bullet}$. 
    \item For variables $x,y\in\mathbb{R}$, the notation $\sig$ is a function defined as $\sig^x(y)=|y|^x\sign(y)$. 
    \item For variables $z,\mu\in\mathbb{R}$, the notation $\lceil{z,\mu}\rfloor$ is a function defined as $\lceil{z,\mu}\rfloor=\sig^{1+\frac{1}{\mu}}(z)+\sig(z)+\sig^{1-\frac{1}{\mu}}(z)$
\end{itemize}

\subsection{Lemmas}
To prove controller stability, the following lemmas are used in this paper. 
\begin{lemma}\label{lemma2}
\cite{gao2020fixed}. Consider a nonlinear system
\begin{equation}\label{eq: nlfunc}
    \dot{\ve x}(t)=f\bigl(\ve x(t)\bigr),\quad t>t_0, \quad \ve x(t_0)=\ve x_0
\end{equation}
in which $\ve x = [x_1, ..., x_n]^T\in\mathbb{R}^n$ is the state variable and $f(\ve x):\:\mathbb{R}^n\rightarrow\mathbb{R}^n$ is a nonlinear function. It is assumed that the origin is an equilibrium point of the system, and in this paper, it is assumed that $t_0=0$ with $x_0$ as the initial condition. If there exists a Lyapunov function that satisfies
    \begin{equation}
        \dot{V}(x)\leq -\bigl(\alpha V^p(x)+\beta V^q(x)\bigr)^k
    \end{equation}
    with $\alpha,\beta, p, q, k\in\mathbb{R}>0$ and $pk<1,\:qk>1$, then the origin of the system is globally fixed-time stable, and its settling time $T$ is bounded by
    \begin{equation}
        T(\ve x_0) \leq T_{max} = \frac{1}{\alpha^k(1-pk)}+\frac{1}{\beta^k(qk-1)}, \:\forall \ve x_0\in\mathbb{R}^n
    \end{equation}
\end{lemma}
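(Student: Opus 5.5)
The plan is a comparison argument on the scalar function $v(t)=V(\ve x(t))$, splitting the trajectory into the phase where $V\geq 1$ and the phase where $V\leq 1$. On each phase, one of the two terms in $\bigl(\alpha V^p+\beta V^q\bigr)^k$ dominates and gives a finite-time bound that does not depend on $\ve x_0$. Throughout, I take $V$ to be continuously differentiable, positive definite and radially unbounded, so that $V\to 0$ forces $\ve x\to 0$. Since $\alpha V^p+\beta V^q\geq 0$, the inequality gives $\dot V\leq 0$ everywhere, so $v(t)$ is nonincreasing. Once $v$ reaches zero it stays there, because $v\geq 0$ and $\dot v\leq 0$. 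The origin is therefore Lyapunov stable, and it remains to bound the time $T$ at which $v$ first hits $0$.

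\textbf{Phase 1 ($v\geq 1$).} If $v(0)>1$, I use $\alpha V^p+\beta V^q\geq \beta V^q$ and monotonicity of $s\mapsto s^k$ to get $\dot v\leq -\beta^k v^{qk}$. Separating variables, $\frac{d}{dt}v^{1-qk}=(1-qk)v^{-qk}\dot v\geq (qk-1)\beta^k$. Integrating from $0$ to $t$ gives $v(t)^{1-qk}\geq v(0)^{1-qk}+(qk-1)\beta^k t\geq (qk-1)\beta^k t$, where I drop $v(0)^{1-qk}\geq 0$; this is exactly where the bound becomes independent of $\ve x_0$. Since $1-qk<0$, the condition $v(t)\geq 1$ forces $v^{1-qk}\leq 1$. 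Hence the time $T_1$ to reach $\{v\leq 1\}$ satisfies $T_1\leq \frac{1}{\beta^k(qk-1)}$. If $v(0)\leq 1$, set $T_1=0$.

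\textbf{Phase 2 ($v\leq 1$).} For $t\geq T_1$, I use $\alpha V^p+\beta V^q\geq \alpha V^p$ to get $\dot v\leq -\alpha^k v^{pk}$ with $pk<1$. Then $\frac{d}{dt}v^{1-pk}\leq -(1-pk)\alpha^k$ as long as $v>0$. Integrating from $T_1$, where $v^{1-pk}\leq 1$, the quantity $v^{1-pk}$ reaches zero within time $\frac{1}{\alpha^k(1-pk)}$. Adding the two phases gives $T(\ve x_0)\leq T_{max}$ for every $\ve x_0$. Because no bound on $\ve x_0$ was used, stability is global. Together with the Lyapunov stability above, this gives global fixed-time stability.

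\textbf{Main obstacle.} The delicate step is the rigorous comparison near $v=0$. The map $v\mapsto v^{pk}$ is non-Lipschitz at $0$, so uniqueness of the comparison ODE fails there. I would avoid invoking a comparison lemma and work directly with the differential inequality for $w=v^{1-pk}$ on the open set $\{v>0\}$, where it is differentiable. I would then use monotonicity to argue that once $v$ reaches $0$ it cannot leave, and that $w$ cannot remain positive beyond the stated time. A secondary technical point is existence of solutions, since $f$ need only be continuous; I would assume solutions exist in the sense used in \cite{gao2020fixed}.
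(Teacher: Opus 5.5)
Your proof is correct. It is the standard two-phase argument for this bound: the $\beta^k V^{qk}$ term brings $V$ into $\{V\le 1\}$ within $1/(\beta^k(qk-1))$, and then the $\alpha^k V^{pk}$ term drives it to zero within $1/(\alpha^k(1-pk))$. The paper gives no proof of its own and only cites \cite{gao2020fixed}.

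One adjustment is worth making: drop the standing assumption $V\in C^1$. Your argument only uses that $t\mapsto V(\ve x(t))$ is absolutely continuous with the differential inequality holding almost everywhere, which is enough to integrate $v^{1-qk}$ and $v^{1-pk}$ on $\{v>0\}$. That weaker form is what the paper actually needs, because Appendix~\ref{app: proof no adaptation} applies the lemma to the nonsmooth $V_1=|Y_e|+|Z_e|$.
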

\begin{lemma}\label{lemma3}
    \cite{gao2020fixed}. For a nonlinear system \eqref{eq: nlfunc}, if there exists a Lyapunov function that satisfies
    \begin{equation}
        \dot{V}(x)\leq -(\alpha V^p(x)+\beta V^q(x))^k+\vartheta
    \end{equation}
    with $\alpha,\beta, p, q, k\in\mathbb{R}>0$, $pk<1,\:qk>1$, and $\vartheta\in(0,+\infty)$ is finite, then the system is practical fixed-time stable and the residual set of its solution is
    \begin{equation}\label{eq: fixed residual set}
    \begin{split}
        \mathcal{X}=\biggl\{\lim_{t\rightarrow T}\ve x|\|\ve x\|&\leq\min\Bigl\{\alpha^{-\frac{1}{p}}\Bigl(\frac{\vartheta}{1-\theta}\Bigr)^{\frac{1}{p}}, \\&\quad\quad \beta^{-\frac{1}{q}}\Bigl(\frac{\vartheta}{1-\theta}\Bigr)^\frac{1}{q}\Bigr\}\biggr\}
    \end{split}
    \end{equation}
    where $\theta\in(0,1)$, and the settling time $T$ is bounded by
    \begin{equation}
        T\leq T_{max}:=\frac{1}{\alpha\theta(1-p)}+\frac{1}{\beta\theta(q-1)}
    \end{equation}
\end{lemma}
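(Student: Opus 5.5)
The plan is to reduce the perturbed inequality to the unperturbed one of Lemma \ref{lemma2} on the region outside a sublevel set of $V$, and then argue that this sublevel set is reached in fixed time and is forward invariant. The settling-time bound $T_{max}$ in the statement contains no exponent $k$. I will therefore carry out the argument for $k=1$, which is the case the stated constants correspond to. For general $k$, the same steps go through by replacing $\theta$ with $\theta^{k}$ (or using $(a+b)^k$ bounds), and the constants would change accordingly.

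\emph{Step 1: split the decay term.} With $\theta\in(0,1)$ fixed, write
\begin{equation*}
\dot V\le-\theta\bigl(\alpha V^p+\beta V^q\bigr)-(1-\theta)\bigl(\alpha V^p+\beta V^q\bigr)+\vartheta .
\end{equation*}
The last two terms together are nonpositive as soon as $(1-\theta)\alpha V^p\ge\vartheta$ or $(1-\theta)\beta V^q\ge\vartheta$. Both terms $\alpha V^p$ and $\beta V^q$ are nonnegative, so either condition alone suffices. This happens exactly when
\begin{equation*}
V\ge\varepsilon:=\min\Bigl\{\alpha^{-\frac1p}\Bigl(\tfrac{\vartheta}{1-\theta}\Bigr)^{\frac1p},\ \beta^{-\frac1q}\Bigl(\tfrac{\vartheta}{1-\theta}\Bigr)^{\frac1q}\Bigr\},
\end{equation*}
which is the radius appearing in \eqref{eq: fixed residual set}. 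On the set $\{V\ge\varepsilon\}$ we thus have $\dot V\le-(\alpha\theta V^p+\beta\theta V^q)$.

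\emph{Step 2: fixed-time reaching.} While $V(\ve x(t))\ge\varepsilon$, apply the comparison argument underlying Lemma \ref{lemma2} with gains $\alpha\theta,\beta\theta$ and $k=1$. Concretely, integrate $dV/(\alpha\theta V^p+\beta\theta V^q)\le-dt$, splitting the integral at $V=1$. On $[1,V_0]$ bound the integrand by $1/(\beta\theta V^q)$. On $[\varepsilon,1]$ (or $[0,1]$) bound it by $1/(\alpha\theta V^p)$. This gives a hitting time of $\{V\le\varepsilon\}$ of at most $\frac{1}{\alpha\theta(1-p)}+\frac{1}{\beta\theta(q-1)}$, independent of $\ve x_0$.

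\emph{Step 3: invariance.} On the boundary $V=\varepsilon$ we have $\dot V\le-\theta(\alpha\varepsilon^p+\beta\varepsilon^q)<0$. Hence the sublevel set $\{V\le\varepsilon\}$ is forward invariant, and the trajectory stays in the residual set after $T$.

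The step I expect to be the main obstacle is translating the bound on $V$ into the bound on $\|\ve x\|$ written in \eqref{eq: fixed residual set}. As stated, the lemma implicitly identifies the two, so I would make explicit the standing assumption that $V$ is comparable to $\|\ve x\|$. The cleanest version states the residual set as $\{V(\ve x)\le\varepsilon\}$. If $V$ is radially unbounded with class-$\mathcal K_\infty$ bounds $\underline\gamma(\|\ve x\|)\le V$, one then obtains $\|\ve x\|\le\underline\gamma^{-1}(\varepsilon)$, and this reduces to the displayed form when $\underline\gamma$ is the identity.
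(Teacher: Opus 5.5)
Your proof is correct. The paper gives no argument of its own here; the lemma is simply imported from \cite{gao2020fixed}. Your route is the standard proof of this kind of result: you sacrifice a $(1-\theta)$ share of the decay to dominate $\vartheta$ on $\{V\ge\varepsilon\}$, apply the unperturbed estimate with gains $\theta\alpha,\theta\beta$ split at $V=1$, and close with forward invariance via $\dot V\le-\theta(\alpha\varepsilon^p+\beta\varepsilon^q)<0$ on $\{V=\varepsilon\}$. For $k=1$ it yields exactly the stated radius and $T_{max}$.

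Both caveats you raise are real defects of the statement as transcribed, not gaps in your argument. The restriction to $k=1$ is forced, not merely convenient. With $k=2$, $p=0.4$, $q=0.6$ the hypotheses $pk<1<qk$ hold, yet for $\alpha=\beta=1$, $\theta=1/2$ the displayed $T_{max}$ equals $1/0.3-1/0.2<0$.

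Your aside on general $k$ is loose. The clean version splits $(\alpha V^p+\beta V^q)^k$ itself, so Lemma~\ref{lemma2} applies with gains $\theta^{1/k}\alpha,\theta^{1/k}\beta$. This gives $T_{max}=\frac{1}{\theta\alpha^k(1-pk)}+\frac{1}{\theta\beta^k(qk-1)}$ and radius exponents $1/(pk)$, $1/(qk)$.

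Likewise, the residual set must be read as $\{V(\ve x)\le\varepsilon\}$, since nothing in the hypotheses ties $V$ to $\|\ve x\|$. In the paper's application (Appendix~\ref{app: proof no adaptation}) one has $k=1$ and $V_1=|Y_e|+|Z_e|\ge\|(Y_e,Z_e)\|$, which is exactly your identity-$\underline{\gamma}$ case.
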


\subsection{Kinematic model}
\begin{figure}
    \centering
    \includegraphics[width=1\linewidth]{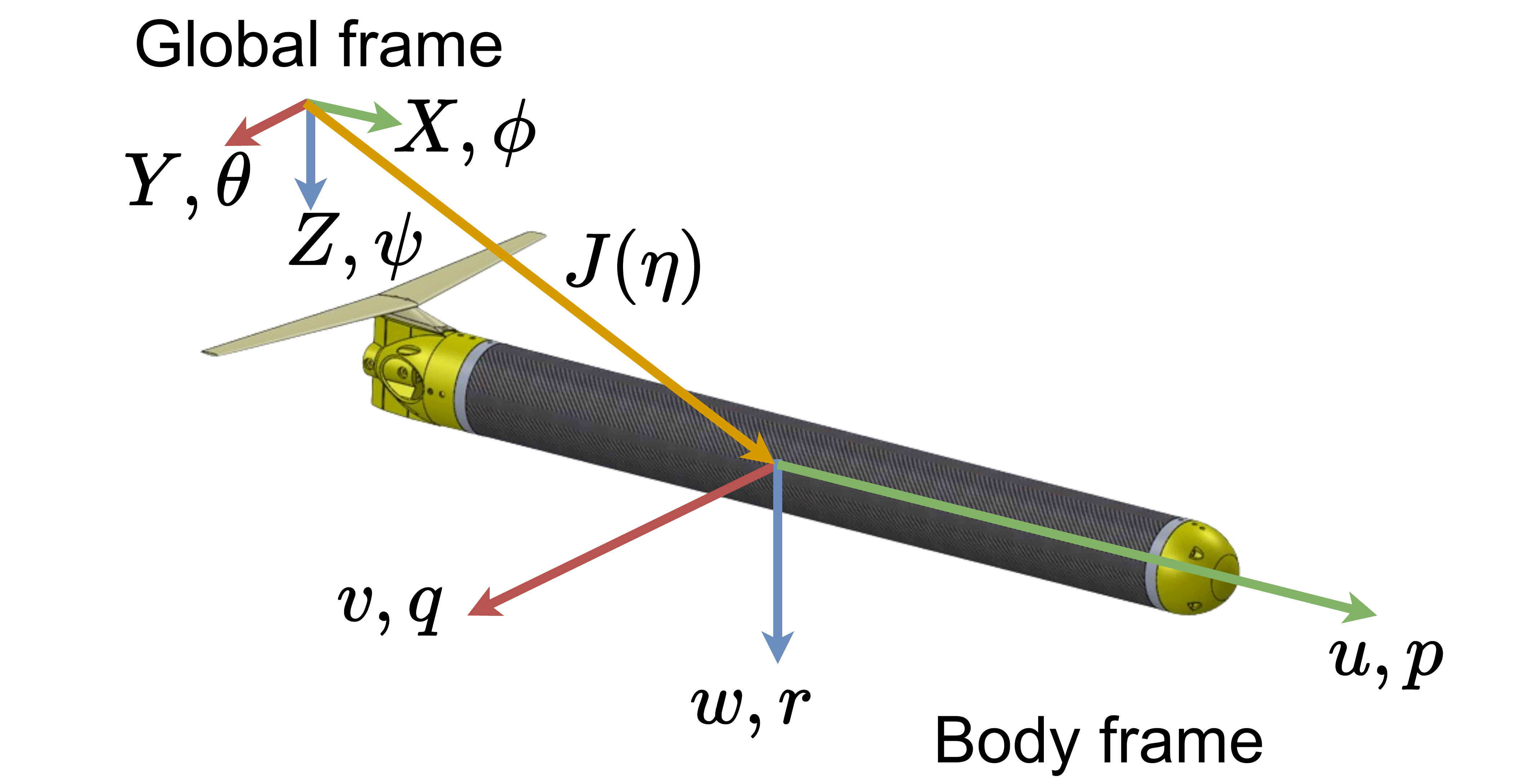}
    \caption{NED coordinate system of underwater vehicles. $X$,$Y$,$Z$,$\phi$,$\theta$,$\psi$ are the position and orientation of the vehicle in global frame, and $u$,$v$,$w$,$p$,$q$,$r$ are the velocity in body frame. } 
    \label{fig: Coord}
\end{figure}
The kinematic model of AUVs is described in a body-fixed coordinate frame and a global coordinate frame as shown in Fig. \ref{fig: Coord} and is given as 
\begin{equation} \label{eq: eom-kinematics2}
    \dot{\ve \eta} = \ve J(\ve \eta)\ve\nu
\end{equation}
where \(\ve\eta = [X,Y,Z]^T\) is the vehicle's position in the global frame, and \(\ve\nu = [u,v,w]^T \) is the vehicle's velocity in the body frame. The Euler angle rotation matrix $\ve J(\ve\eta)$ is defined in \textit{ZYX} convention
\begin{equation}
    \ve J(\ve\eta)=\begin{bmatrix}
        \mathbf{c}\theta\mathbf{c}\psi \quad& \mathbf{s}\phi\mathbf{s}\theta\mathbf{c}\psi-\mathbf{c}\phi\mathbf{s}\psi \quad& \mathbf{c}\phi\mathbf{s}\theta\mathbf{c}\psi+ \mathbf{s}\phi\mathbf{s}\psi \\
        \mathbf{c}\theta\mathbf{s}\psi \quad& \mathbf{s}\phi\mathbf{s}\theta\mathbf{s}\psi+\mathbf{c}\phi\mathbf{c}\psi \quad& \mathbf{c}\phi\mathbf{s}\theta\mathbf{s}\psi- \mathbf{s}\phi\mathbf{c}\psi \\
        -\mathbf{s}\theta \quad& \mathbf{s}\phi\mathbf{c}\theta \quad& \mathbf{c}\phi\mathbf{c}\theta
    \end{bmatrix}
\end{equation}

The kinematic model \eqref{eq: eom-kinematics2} can be rewritten in an amplitude-phase form \cite{Fossen2023AmplitudePhase} expressed by
\begin{equation}
    \label{eq: eom-kinematics-amplitudephase}
    \dot{\ve\eta}=\begin{bmatrix}
        U_h\cos{(\psi+\beta)}\\
        U_h\sin{(\psi+\beta)}\\
        -U_v\sin{(\theta-\alpha)}
    \end{bmatrix}
\end{equation}
where the phase angles are defined as
\begin{align}
    \alpha&=\tan^{-1}\Bigl(\frac{v\sin(\phi)+w\cos(\phi)}{u}\Bigr)\\
    \beta&=\tan^{-1}\Bigl(\frac{v\cos(\phi)-w\sin(\phi)}{U_v\cos(\theta-\alpha)}\Bigr)
\end{align}
and the amplitude speeds are given by
\begin{align}
    U_v&=u\sqrt{1+\tan^2(\alpha)}\\
    U_h&=U_v\cos(\theta-\alpha)\sqrt{1+\tan^2(\beta)}\label{eq: Uh_1}
\end{align}

\subsection{Error dynamics and control objective}
In a waypoint-based path following problem, the position tracking error is defined in an additional reference frame with an axis parallel to the straight path connecting two waypoints; for example, given a waypoint $\ve\eta_n=[X_n,Y_n,Z_n]^T$ and its next waypoint $\ve\eta_{n+1}=[X_{n+1},Y_{n+1},Z_{n+1}]^T$, the reference frame has its origin at $\ve\eta_{n}$ and the $x$-axis pointing toward $\ve\eta_{n+1}$. In this frame, the {along}-, {cross}-, and {vertical}-{track} error $\ve\eta_e=[X_e,Y_e,Z_e]^T$ is given by
\begin{equation}\label{eq: track error}
    \ve\eta_e=\ve{J}_Y^T \ve{J}_Z^T(\ve{\eta}-\ve{\eta}_n)
\end{equation}
in which the rotation matrices yield
\begin{align}
    \ve{J}_Y&=\begin{bmatrix}
        \mathbf{c}\pi_v&0&\mathbf{s}\pi_v\\
        0&1&0\\
        -\mathbf{s}\pi_v&0&\mathbf{c}\pi_v
    \end{bmatrix}\\
    \ve{J}_Z&=\begin{bmatrix}
        \mathbf{c}\pi_h&-\mathbf{s}\pi_h&0\\
        \mathbf{s}\pi_h&\mathbf{c}\pi_h&0\\
        0&0&1
    \end{bmatrix}
\end{align}
and the azimuth angle $\pi_h$ and elevation angle $\pi_v$ are given by
\begin{align}
    \pi_h&=\atan2(Y_{n+1}-Y_n, X_{n+1}-X_n)\\
    \pi_v&=\atan2\Bigl(-(Z_{n+1}-Z_n),\\\notag&\quad\quad\quad\sqrt{(X_{n+1}-X_n)^2+(Y_{n+1}-Y_n)^2}\Bigr)
\end{align}

Taking the derivative of \eqref{eq: track error} and substituting \eqref{eq: eom-kinematics-amplitudephase} yields the tracking error dynamics
\begin{equation}
    \label{eq: error dynamics}
    \dot{\ve\eta}_e = \ve{J}_Y^T \ve{J}_Z^T\begin{bmatrix}
        U_h\cos{(\psi+\beta)}\\
        U_h\sin{(\psi+\beta)}\\
        -U_v\sin{(\theta-\alpha)}
    \end{bmatrix}
\end{equation}
To track the target path in both vertical and horizontal planes, this work focuses on the cross- and vertical-track errors which can be expanded, from \eqref{eq: error dynamics}, as
\begin{align}
    \label{eq: cross-track error dynamics}
    \dot{Y}_e&=U_h\sin(\psi+\beta-\pi_h)\\
    \dot{Z}_e&=U_h\sin(\pi_v)\cos(\psi+\beta-\pi_h)-U_v\cos(\pi_v)\sin(\theta-\alpha)\label{eq: ze_1}
\end{align}
Substituting \eqref{eq: Uh_1} into \eqref{eq: ze_1} gives a compact form of the vertical-track error dynamics
\begin{equation}
\begin{split}
    \label{eq: vertical-track error dynamics}
    \dot{Z}_e &= -U_v\sin(\theta-\alpha-\pi_v)+U_v\sin(\pi_v)\cos(\theta-\alpha)\cdot\\&\quad\quad\quad\cdot\Bigl(\sqrt{1+\tan^2(\beta)}\cos(\psi+\beta-\pi_h)-1\Bigr)\\
    &=-U_v\sin(\theta-\alpha-\pi_v)+\frac{U_h\sin(\pi_v)}{\sqrt{1+\tan^2(\beta)}}\cdot\\&\quad\quad\quad\cdot\Bigl(\sqrt{1+\tan^2(\beta)}\cos(\psi+\beta-\pi_h)-1\Bigr)
\end{split}
\end{equation}

The control objective of this work is to design a guidance law that computes the required pitch and heading angles, $\theta_d$ and $\psi_d$, to enable the underwater vehicle to follow the preset waypoints and maintain its course, and to ensure the fixed-time stability of the cross- and vertical-track errors. During the path following along the straight segments connecting each waypoint, the following assumptions are made: 
\begin{assumption}\label{assum1}
    The underwater vehicle is moving at a nearly constant positive surge velocity s.t. $0<u_{min}\leq u\leq u_{max}$. 
\end{assumption}
\begin{assumption}\label{assum2}
    The crab angles $\alpha$ and $\beta$ are nearly constant along the straight path segments s.t. $\dot{\alpha}\approx0$,$\dot{\beta}\approx0$. 
\end{assumption}
\begin{assumption}\label{assum3}
    The crab angles $\alpha$ and $\beta$ are bounded along the entire path with known upper boundaries s.t. $|\alpha|\leq M_\alpha$ and $|\beta|\leq M_\beta$. 
\end{assumption}
\begin{assumption}\label{assum4}
    The lower-level controller for heading and pitch angle controls on the AUV guarantees accurate tracking with low time-delay in the response, s.t. $\theta(t)=\theta_d(t)$ and $\psi(t)=\psi_d(t)$. 
\end{assumption}

\begin{remark}
    {Assumptions~\ref{assum1}-\ref{assum3} are standard assumptions in ocean vehicle control because AUVs typically maintain a constant target yaw velocity during stable waypoint navigation, and ocean currents change slowly relative to vehicle dynamics, naturally limiting the yaw angle to a certain range and causing it to change slowly. Assumption \ref{assum4}, however, is an idealized assumption with practical limitations. In actual hardware deployments, the underlying attitude controller is strictly constrained by actuator limitations and the serial communication delay between the frontseat and backseat computers of the vehicle. Therefore, in such cases, the control system needs to be carefully tuned to ensure that the bandwidth of the attitude controller is fast enough to the guidance loop, thereby stabilizing the vehicle without causing dynamic coupling or trajectory overshoot.}
\end{remark}

\section{Controller Design}\label{Method}
A FxTLOS guidance law is designed as follows, 
\begin{align}
    \label{eq: FxTLOS-guidance-1}
    &\psi_d=\pi_h-{\beta}+\tan^{-1}\Bigl(\frac{-k_1(\lceil{Y_e,\mu}\rfloor)}{\Delta_h}\Bigr)\\
    \label{eq: FxTLOS-guidance-2}
    &\theta_d=\pi_v+{\alpha}+\tan^{-1}\Bigl(\frac{k_2(\lceil{Z_e,\mu}\rfloor)}{\Delta_v}\Bigr)
\end{align}
where $k_1, k_2>0\in\mathbb{R}$ and $\mu>1\in\mathbb{R}$ are controller parameters to be tuned, and $\Delta_h$ and $\Delta_v$ are look-ahead distances in the horizontal and vertical planes. 

Since not all AUV models have body-frame velocity feedback sensors, like Doppler Velocity Log (DVL), installed, in some cases the crab angles, $\alpha$ and $\beta$, are difficult to measure and compute. Even when a DVL is available, bottom-track measurements may become unavailable when the sensor loses bottom lock, and not all DVLs support water-tracking mode. An adaptive estimator therefore provides a valuable complementary source of information and a degree of redundancy when direct velocity measurements are unavailable or unreliable. To composensate for drift in both horizontal and vertical planes, a robust adaptation law is introduced to estimate the unknown crab angles, 
\begin{align}
    \label{eq: AFxTLOS-adaptation-1}
    &\dot{\hat{\alpha}}=\gamma_v\frac{\Delta_v}{\sqrt{\Delta_v^2+k_2^2(\lceil{Z_e,\mu}\rfloor)^2}}\proj{\bigl(\hat\alpha, k_2(\lceil{Z_e,\mu}\rfloor)\bigr)}\\
    \label{eq: AFxTLOS-adaptation-2}
    &\dot{\hat{\beta}}=\gamma_h\frac{\Delta_h}{\sqrt{\Delta_h^2+k_1^2(\lceil{Y_e,\mu}\rfloor)^2}}\proj{\bigl(\hat\beta, k_1(\lceil{Y_e,\mu}\rfloor)\bigr)}
\end{align}
where the parameter projection to enhance the controller's robustness is defined as
\begin{equation}
    \label{eq: AFxTLOS-projection}
    \proj{(\vartheta,\tau)}=\begin{cases}
         \bigl (1-c(\vartheta)\bigr )\tau&\text{if}\;|\vartheta|>M_\vartheta\;\text{and}\; \vartheta^T\tau>0  \\
         \quad\quad\tau& \text{else}
    \end{cases}
\end{equation}
in which $c(\vartheta)=\min\bigl\{(\hat{\vartheta}^2-M_\vartheta^2)/(M_{\hat{\vartheta}}^2-M_\vartheta^2),1\bigr\}$, $M_{\hat\vartheta}=M_\vartheta+\epsilon$, and $\epsilon>0\in\mathbb{R}$ is a small constant parameter to be designed such that $M_{\hat\vartheta}$ is slightly larger than $M_\vartheta$. Inserting \eqref{eq: AFxTLOS-adaptation-1} and \eqref{eq: AFxTLOS-adaptation-2} to \eqref{eq: FxTLOS-guidance-1} and \eqref{eq: FxTLOS-guidance-2} gives the adaptive version of the fixed-time LOS guidance law, 
\begin{align}
    \label{eq: AFxTLOS-guidance-proj-1}
    &\psi_d=\pi_h-\hat{\beta}+\tan^{-1}\Bigl(\frac{-k_1(\lceil{Y_e,\mu}\rfloor)}{\Delta_h}\Bigr)\\
    \label{eq: AFxTLOS-guidance-proj-2}
    &\theta_d=\pi_v+\hat{\alpha}+\tan^{-1}\Bigl(\frac{k_2(\lceil{Z_e,\mu}\rfloor)}{\Delta_v}\Bigr)
\end{align}
Substituting \eqref{eq: AFxTLOS-guidance-proj-1} and \eqref{eq: AFxTLOS-guidance-proj-2} into \eqref{eq: cross-track error dynamics} and \eqref{eq: vertical-track error dynamics} yields
    \begin{equation}
        \label{eq: proof-error-1}
        \begin{split}
        \dot{Y}_e&=U_h\sin{\biggl(\Tilde{\beta}+\tan^{-1}\Bigl(\frac{-k_1(\lceil{Y_e,\mu}\rfloor)}{\Delta_h}\Bigr)\biggr)}\\
        &=U_h\sin(\Tilde{\beta})\cos\biggl(\tan^{-1}\Bigl(\frac{k_1\sigmu{Y_e}}{\Delta_h}\Bigr)\biggr)\\&\quad\;-U_h\cos(\Tilde{\beta})\sin\biggl(\tan^{-1}\Bigl(\frac{k_1\sigmu{Y_e}}{\Delta_h}\Bigr)\biggr)\\
        &= U_h\frac{\Delta_h}{\sqrt{\Delta_h^2+k_1^2\sigmu{Y_e}^2}}\sin(\Tilde{\beta}) \\&\quad\;-U_h\frac{k_1\sigmu{Y_e}}{\sqrt{\Delta_h^2+k_1^2\sigmu{Y_e}^2}}\cos(\Tilde{\beta}) \\
        &= -U_h\frac{\Delta_h}{\sqrt{\Delta_h^2+k_1^2\sigmu{Y_e}^2}}\biggl[ \cos(\Tilde{\beta})\frac{k_1\sigmu{Y_e}}{\Delta_h}-\sin(\Tilde{\beta}) \biggr]
        \end{split}\end{equation}
        \begin{equation}
        \label{eq: proof-error-2}
        \begin{split}
        \dot{Z}_e&=-U_v\sin\biggl(\tan^{-1}\Bigl(\frac{k_2(\lceil{Z_e,\mu}\rfloor)}{\Delta_v}\Bigr)-\Tilde{\alpha}\biggr)+d(\beta,Y_e)\\
        &=-U_v\sin\biggl(\tan^{-1}\Bigl(\frac{k_2\sigmu{Z_e}}{\Delta_v})\Bigr)\biggr)\cos(\Tilde{\alpha}) \\&\quad\; + U_b\cos\biggl(\tan^{-1}\Bigl(\frac{k_2\sigmu{Z_e}}{\Delta_v})\Bigr)\biggr)\sin(\Tilde{\alpha})+d(\beta,Y_e)  \\
        &= -U_v\frac{k_2\sigmu{Z_e}}{\sqrt{\Delta_v^2+k_2^2\sigmu{Z_e}^2}}\cos(\Tilde{\alpha}) \\&\quad\; + U_v\frac{\Delta_v}{\sqrt{\Delta_v^2+k_2^2\sigmu{Z_e}^2}}\sin(\Tilde{\alpha}) + d(\beta,Y_e)  \\
        &= -U_v\frac{\Delta_v}{\sqrt{\Delta_v^2+k_2^2\sigmu{Z_e}^2}}\biggl[ \cos(\Tilde{\alpha})\frac{k_2\sigmu{Z_e}}{\Delta_v}-\sin(\Tilde{\alpha}) \biggr] \\&\quad\; + d(\tilde\beta, Y_e)
        \end{split}
    \end{equation} 
    where $\Tilde{\alpha} = \alpha - \hat\alpha$ and $\Tilde{\beta} = \beta - \hat\beta$ are the crab angle estimate errors, and the perturbation term is given by 
    \begin{equation}\begin{split}
    \label{eq: perturbation}
        d(\tilde\beta, Y_e) &= \frac{U_h\sin(\pi_v)}{\sqrt{1+\tan^2(\beta)}}\biggl[\sqrt{1+\tan^2(\beta)}\cdot\\&\quad\;\cdot\cos\biggl(\Tilde{\beta}-\tan^{-1}\Bigl(\frac{-k_1(\lceil Y_e,\mu\rfloor}{\Delta_h}\Bigr)\biggr)-1\biggr]
    \end{split}\end{equation}
Based on Assumption \ref{assum2}, $\dot{\tilde\alpha}=-\dot{\hat\alpha}$ and $\dot{\tilde\beta}=-\dot{\hat\beta}$, so, 
\begin{align}
\label{eq: proof-adaptation-1}
    \dot{\tilde\alpha} &= -\gamma_v\frac{\Delta_v}{\sqrt{\Delta_v^2+k_2^2(\lceil{Z_e,\mu}\rfloor)^2}}\proj{\bigl(\hat\alpha, k_2(\lceil{Z_e,\mu}\rfloor)\bigr)} \\
\label{eq: proof-adaptation-2}
    \dot{\tilde\beta} &= -\gamma_h\frac{\Delta_h}{\sqrt{\Delta_h^2+k_1^2(\lceil{Y_e,\mu}\rfloor)^2}}\proj{\bigl(\hat\beta, k_1(\lceil{Y_e,\mu}\rfloor)\bigr)}
\end{align}

\begin{lemma}\label{lemma: stability no adaptation}
    The origin $(Y_e, Z_e)=(0,0)$ of the subsystem consisting of \eqref{eq: proof-error-1} and \eqref{eq: proof-error-2} is fixed-time stable if the adaptation laws \eqref{eq: AFxTLOS-adaptation-1} and \eqref{eq: AFxTLOS-adaptation-2} estimates the crab angles perfectly, i.e. $\tilde\alpha=0$ and $\tilde\beta=0$, and if the perturbation term \eqref{eq: perturbation} is zero. Additionally, if the perturbation term is nonzero, then $Y_e=0$ is fixed-time stable and $Z_e=0$ is practical fixed-time stable. 
\end{lemma}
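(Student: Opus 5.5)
With $\tilde\alpha=\tilde\beta=0$, the closed loop \eqref{eq: proof-error-1}--\eqref{eq: proof-error-2} decouples into
\begin{equation*}
\dot Y_e=-\frac{U_h k_1\sigmu{Y_e}}{\sqrt{\Delta_h^2+k_1^2\sigmu{Y_e}^2}},\qquad
\dot Z_e=-\frac{U_v k_2\sigmu{Z_e}}{\sqrt{\Delta_v^2+k_2^2\sigmu{Z_e}^2}}+d .
\end{equation*}
I would treat each scalar channel with its own quadratic Lyapunov function, $V_1=\tfrac12 Y_e^2$ and $V_2=\tfrac12 Z_e^2$, and invoke Lemma~\ref{lemma2} for $Y_e$ and Lemma~\ref{lemma3} for $Z_e$.

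\emph{Main obstacle.} The saturating denominator $\sqrt{\Delta^2+k^2\sigmu{\cdot}^2}$ grows like $|Y_e|^{1+1/\mu}$, so the right-hand side saturates at $U_h$ for large errors. The hoped-for inequality $\dot V\le-(aV^p+bV^q)^k$ with $qk>1$ then fails globally. I plan to handle this by splitting into two regimes.

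\emph{Near regime.} On a region $|Y_e|\le R$ the denominator is bounded by $D_R=\sqrt{\Delta_h^2+k_1^2\sigmu{R}^2}$. Using $U_h\ge u_{\min}\cos(M_\beta)\cos(\cdot)>0$ from Assumptions~\ref{assum1} and~\ref{assum3}, together with $Y_e\sigmu{Y_e}\ge |Y_e|^{2+1/\mu}+|Y_e|^{2-1/\mu}$, I obtain
\begin{equation*}
\dot V_1\le -\frac{u_{\min}' k_1}{D_R}\Bigl(2^{1+\frac1{2\mu}}V_1^{1+\frac1{2\mu}}+2^{1-\frac1{2\mu}}V_1^{1-\frac1{2\mu}}\Bigr).
\end{equation*}
This is the form of Lemma~\ref{lemma2} with $k=1$, $p=1-\frac1{2\mu}<1$, and $q=1+\frac1{2\mu}>1$, so it gives an explicit $T_{\max}$.

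\emph{Far regime.} For $|Y_e|>R$ the fraction $k_1\sigmu{Y_e}/\sqrt{\cdot}$ is at least a constant $c_R>0$. Hence $|Y_e|$ decreases at least at the linear rate $u_{\min}'c_R$. To keep this reaching phase from depending on initial conditions, I would choose $R$ (or argue via a nonsmooth bound) so that $\dot V_1\le -c\,V_1^{1/2}$ combined with the growth of $\sigmu{\cdot}$ still yields a uniform bound. If that fails, I will honestly state fixed-time stability on any compact region with $T_{\max}$ depending on $R$ only through $D_R$; this is the step most likely to need a weakened, semi-global claim.

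\emph{Vertical channel.} The same computation gives $\dot V_2\le -(\cdots)+|Z_e||d|$. Because $U_h\le U_v$ and $|\sin\pi_v|\le1$, the perturbation satisfies $|d|\le 2U_{v,\max}$, and by \eqref{eq: perturbation} it vanishes once $Y_e$ has converged. I would apply Young's inequality, $|Z_e||d|\le \tfrac{\epsilon}{2}Z_e^2+\tfrac{1}{2\epsilon}d^2$, absorb the $\epsilon V_2$ term into the $V_2^{q}$ term for $|Z_e|$ away from zero, and set $\vartheta=\sup d^2/(2\epsilon)$. This gives
\begin{equation*}
\dot V_2\le-(aV_2^p+bV_2^q)+\vartheta,
\end{equation*}
and Lemma~\ref{lemma3} yields practical fixed-time stability with the residual set \eqref{eq: fixed residual set}.

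Finally, when $d\equiv0$, which holds once $Y_e=0$ after its fixed time, the $Z_e$ channel reduces to the same estimate as the $Y_e$ channel. Combining the settling times of the two channels then proves the first claim.
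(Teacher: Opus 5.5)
Your far-regime step cannot be completed, and the obstruction is not technical: the global claim with an initial-condition-independent settling time is false for this closed loop. Since $|\dot Y_e| = U_h k_1|\sigmu{Y_e}|/\sqrt{\Delta_h^2+k_1^2\sigmu{Y_e}^2}<U_h\le U_{h,\max}$, every trajectory needs at least $|Y_e(0)|/U_{h,\max}$ to reach $Y_e=0$. The same holds for $Z_e$ with $|Z_e(0)|/(U_{v,\max}+\sup|d|)$. So no choice of $R$ and no nonsmooth Lyapunov function can give a uniform bound.

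Your fallback is therefore the correct statement. The set $\{|Y_e|\le R\}$ is forward invariant. Your near-regime estimate, with $p=1-\frac{1}{2\mu}$ and $q=1+\frac{1}{2\mu}$, is right and gives a settling-time bound that depends on $R$ only through $D_R$. From an arbitrary initial condition you get finite-time convergence with $T\le(|Y_e(0)|-R)/(u_{\min}'c_R)+T_{\max}(R)$. That is fixed-time stability uniformly on each compact set of initial conditions (semi-global), but not globally.

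The paper's proof uses the single function $|Y_e|+|Z_e|$ with power-mean inequalities. It reaches the global conclusion only by using the constant $U_{h,\min}/\Delta_h$ as a \emph{lower} bound for $\Omega_1=U_h/\sqrt{\Delta_h^2+k_1^2\sigmu{Y_e}^2}$. It only establishes the upper bound $\Omega_1\le U_{h,\max}/\Delta_h$, and in fact $\Omega_1\to0$ as $|Y_e|\to\infty$, which is exactly the saturation you flagged. So do not try to reproduce the paper's global statement.

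Three smaller points in your argument need fixing.

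First, $d$ does not vanish once $Y_e=0$. Using $U_v\cos(\theta-\alpha)=U_h\cos\beta$, the perturbation is $d=U_h\sin(\pi_v)\bigl[\cos(\psi+\beta-\pi_h)-\cos\beta\bigr]$. With $\tilde\beta=0$ this gives $d(0,0)=U_h\sin(\pi_v)(1-\cos\beta)$, which is nonzero on inclined segments whenever $\beta\neq0$. This is harmless for the first claim, where $d=0$ is a hypothesis. But your closing justification is wrong, and this residual is precisely why $Z_e$ only gets practical stability.

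Second, $U_h\le U_v$ is false in general, because $U_h=U_v\cos(\theta-\alpha)/\cos\beta$. Use $|d|\le 2U_{h,\max}$ instead.

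Third, the two channels are not literally decoupled. $U_h$ contains $\cos(\theta-\alpha)$ with $\theta-\alpha=\pi_v+\tan^{-1}(k_2\sigmu{Z_e}/\Delta_v)$. Your positive lower bound $u_{\min}'$ on $U_h$ therefore holds only for $Z_e$ in a compact set, and only if $|\pi_v|$ is not too large. The paper simply postulates $U_{h,\min}>0$; your semi-global formulation should carry this restriction on $Z_e$ as well.
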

\begin{proof}
    See Appendix \ref{app: proof no adaptation}. 
\end{proof}
\begin{remark}
    Lemma \ref{lemma: stability no adaptation} also indicates that if the body-frame sensor feedback is available, with which the crab angles can be directly computed, the adaptation law can be isolated from the proposed robust controller structure. 
\end{remark}

\begin{lemma}\label{lemma: stability adaptation}
    The adaptive control law consisting of \eqref{eq: AFxTLOS-adaptation-1}, \eqref{eq: AFxTLOS-adaptation-2}, \eqref{eq: AFxTLOS-guidance-proj-1} and \eqref{eq: AFxTLOS-guidance-proj-2} guarantees stability of the tracking errors and convergence of the estimation errors in the presence of zero perturbation. 
\end{lemma}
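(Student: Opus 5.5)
Our plan is a composite Lyapunov argument on the full state $(Y_e,Z_e,\tilde\alpha,\tilde\beta)$ with $d\equiv 0$. We treat the horizontal and vertical subsystems separately, since without the perturbation \eqref{eq: proof-error-1}, \eqref{eq: proof-adaptation-2} and \eqref{eq: proof-error-2}, \eqref{eq: proof-adaptation-1} decouple. Because the adaptation laws are gradient-type rather than fixed-time, we will aim for boundedness of all signals and convergence of the tracking errors and of the estimator input, not a fixed-time bound on $\tilde\alpha,\tilde\beta$.

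For the horizontal plane we take
\begin{equation*}
V_h=\int_0^{Y_e}\sigmu{s}\,ds+\frac{1}{2\gamma_h}\tilde\beta^2 .
\end{equation*}
This is positive definite and radially unbounded, since $\sigmu{s}$ is odd and strictly increasing with $s\sigmu{s}>0$ for $s\neq0$. Write $\rho_h=\Delta_h/\sqrt{\Delta_h^2+k_1^2\sigmu{Y_e}^2}\in(0,1]$. Using \eqref{eq: proof-error-1},
\begin{equation*}
\dot V_h=-U_h\rho_h\Bigl[\cos(\tilde\beta)\tfrac{k_1}{\Delta_h}\sigmu{Y_e}^2-\sin(\tilde\beta)\sigmu{Y_e}\Bigr]+\frac{1}{\gamma_h}\tilde\beta\dot{\tilde\beta}.
\end{equation*}
We split $\sin\tilde\beta=\tilde\beta+(\sin\tilde\beta-\tilde\beta)$. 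The adaptation law \eqref{eq: proof-adaptation-2} is built so that, in the unprojected case, $\tilde\beta\dot{\tilde\beta}/\gamma_h=-\rho_h k_1\sigmu{Y_e}\tilde\beta$. This cancels the linear cross term $U_h\rho_h\tilde\beta\sigmu{Y_e}$ once the gain mismatch between $U_h$ and $k_1$ is absorbed. We expect to rescale $\tilde\beta^2$ by $k_1/(\gamma_h U_{\min})$, or to absorb the residual with Young's inequality using the bounds of Assumption~\ref{assum1} and the boundedness of $U_h$.

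In the projected branch we invoke the standard projection property $\tilde\beta\,(\proj(\hat\beta,\tau)-\tau)\ge0$ when $|\beta|\le M_\beta$ (Assumption~\ref{assum3}). This property makes the projection only help, and it keeps $|\hat\beta|\le M_\beta+\epsilon$. Consequently $|\tilde\beta|\le 2M_\beta+\epsilon$, which we can take to be less than $\pi/2$. Then $\cos\tilde\beta\ge c_0>0$, and the residual $|\sin\tilde\beta-\tilde\beta|\le|\tilde\beta|^3/6$ can be dominated. These estimates yield
\begin{equation*}
\dot V_h\le -c\,\rho_h\sigmu{Y_e}^2\le 0 .
\end{equation*}
The same construction with $V_v=\int_0^{Z_e}\sigmu{s}ds+\tilde\alpha^2/(2\gamma_v)$ handles the vertical plane with $d=0$.

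Next, $V=V_h+V_v$ is nonincreasing, so $Y_e,Z_e,\tilde\alpha,\tilde\beta$ are bounded; this is the stability claim. To get convergence, we integrate $\dot V$ to show $\rho_h\sigmu{Y_e}^2\in\mathcal L_1$. Since the right-hand sides are bounded, $Y_e$ is uniformly continuous, and Barbalat's lemma gives $Y_e,Z_e\to0$. Consequently $\dot{\tilde\alpha},\dot{\tilde\beta}\to0$. For convergence of the estimation errors themselves, we then examine $\dot Y_e$ on the limit set: $Y_e\equiv0$ forces $U_h\sin\tilde\beta=0$, hence $\tilde\beta=0$ given $|\tilde\beta|<\pi$ and $U_h\ge u_{\min}>0$. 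Formally, we apply LaSalle's invariance principle, or Barbalat's lemma a second time to $\dot Y_e$.

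The main obstacle is the cross term: it involves $\sin\tilde\beta$ together with the speed-dependent factor $U_h$, while the adaptation law contains neither. Exact cancellation therefore fails, so the proof must rely on the projection bound $|\tilde\beta|<\pi/2$ and a careful domination argument. We would first try a sector bound: $\sin\tilde\beta$ has the same sign as $\tilde\beta$ and satisfies $|\sin\tilde\beta|\ge(2/\pi)|\tilde\beta|$, so a weighted Lyapunov function can keep the indefinite term nonpositive.
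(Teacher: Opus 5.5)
There is a genuine gap at the central step, the bound $\dot V_h\le -c\,\rho_h\sigmu{Y_e}^2$. With a quadratic estimator term this bound is false, not merely unproven. Take $V_h=\int_0^{Y_e}\sigmu{s}\,ds+\frac{w}{2\gamma_h}\tilde\beta^2$ with any constant weight $w>0$. When the projection is inactive, its derivative is
\[
\dot V_h=-U_h\rho_h\cos(\tilde\beta)\frac{k_1}{\Delta_h}\sigmu{Y_e}^2+\rho_h\sigmu{Y_e}\bigl(U_h\sin\tilde\beta-wk_1\tilde\beta\bigr).
\]
The factor $h(\tilde\beta)=U_h\sin\tilde\beta-wk_1\tilde\beta$ is not identically zero for any choice of $w$; the best choice, $w=U_h/k_1$, still leaves $h\approx-U_h\tilde\beta^3/6$.

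Since $Y_e$ and $\tilde\beta$ are independent states, fix $\tilde\beta\neq0$ with $h(\tilde\beta)\neq0$ and the projection inactive. Let $|Y_e|\to0$, choosing the sign of $Y_e$ so that the cross term is positive. The cross term is linear in $\sigmu{Y_e}$ while the good term is quadratic, so $\dot V_h>0$ there. Young's inequality only turns the cross term into $+h(\tilde\beta)^2/(4\varepsilon)$, and nothing negative in $\tilde\beta$ is available to absorb it, because a gradient estimator provides no damping in $\tilde\beta$. The sector bound does not help either: knowing that $\sin\tilde\beta$ and $\tilde\beta$ share a sign says nothing about the sign of $\sigmu{Y_e}$. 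In this structure the cross term has to cancel exactly.

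The missing idea, which is what the paper does, is to shape the estimator part of the Lyapunov function to the sine nonlinearity and the speed gain. Use $\frac{U_h}{\gamma_hk_1}(1-\cos\tilde\beta)$ instead of $\tilde\beta^2/(2\gamma_h)$, and $\frac{U_v}{\gamma_vk_2}(1-\cos\tilde\alpha)$ in the vertical plane. Treat $U_h$ as constant along a segment, as Assumptions~\ref{assum1}--\ref{assum2} are used to do; otherwise an extra $\dot U_h$ term appears. Then
\[
\frac{U_h}{\gamma_hk_1}\sin(\tilde\beta)\dot{\tilde\beta}=-U_h\rho_h\sin(\tilde\beta)\sigmu{Y_e}Q_Y, \qquad Q_Y\in\{1,1-c(\hat\beta)\}.
\]
This cancels the cross term exactly when the projection is inactive. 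When it is active, it leaves $c(\hat\beta)U_h\rho_h\sin(\tilde\beta)\sigmu{Y_e}\le0$, by your projection sign argument with $\sin\tilde\beta$ in place of $\tilde\beta$, which is valid for $|\tilde\beta|<\pi$.

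With that replacement the rest of your plan works, and it is more careful than the paper's own proof. Your bound $|\tilde\beta|\le 2M_\beta+\epsilon<\pi/2$ supplies $\cos\tilde\beta>0$ and the local positive definiteness of $1-\cos\tilde\beta$, which the paper's final inequality tacitly relies on. Your Barbalat/LaSalle step is what actually delivers convergence of $Y_e$, $Z_e$ and of $\tilde\alpha,\tilde\beta$ from a derivative that is only negative semidefinite; the paper merely asserts that convergence.
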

\begin{proof}
    See Appendix \ref{app: proof adaptation}. 
\end{proof}

\begin{theorem}\label{theorem: convergence}
    Consider the AUV kinematic system described by \eqref{eq: eom-kinematics-amplitudephase}. Applying the proposed AFxTLOS control laws designed as \eqref{eq: AFxTLOS-guidance-proj-1} and \eqref{eq: AFxTLOS-guidance-proj-2} and the adaptation laws designed as \eqref{eq: AFxTLOS-adaptation-1} and \eqref{eq: AFxTLOS-adaptation-2} enables the cross- and vertical-track errors, the dynamic model shown in \eqref{eq: cross-track error dynamics} and \eqref{eq: vertical-track error dynamics}, to converge to a small neighborhood around the origin within a fixed time. Moreover, the finite convergence time $T_{max}$ is independent of the initial conditions and is upper bounded. 
\end{theorem}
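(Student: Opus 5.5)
We establish the theorem by combining Lemma~\ref{lemma: stability adaptation} with Lemma~\ref{lemma: stability no adaptation} and Lemma~\ref{lemma3}. The argument proceeds in three steps:
\begin{enumerate}
\item show that the estimation errors $\tilde\alpha,\tilde\beta$ stay bounded and become small;
\item treat the remaining estimation error, together with the coupling term $d(\tilde\beta,Y_e)$ of \eqref{eq: perturbation}, as a bounded additive perturbation in the error dynamics \eqref{eq: proof-error-1}--\eqref{eq: proof-error-2};
\item derive a practical fixed-time inequality of the form required by Lemma~\ref{lemma3} for a Lyapunov function built from $Y_e$ and $Z_e$.
\end{enumerate}
The settling bound $T_{max}$ then comes directly from Lemma~\ref{lemma3} and depends only on gains and speed bounds.

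\textbf{Step 1: bounded estimates.} The projection operator \eqref{eq: AFxTLOS-projection} keeps $|\hat\alpha|\le M_\alpha+\epsilon$ and $|\hat\beta|\le M_\beta+\epsilon$ for all time. With Assumption~\ref{assum3}, this gives $|\tilde\alpha|\le 2M_\alpha+\epsilon$ and $|\tilde\beta|\le 2M_\beta+\epsilon$ uniformly, independent of the initial conditions. I would take the usual composite function
\[
V_a=\tfrac12 Y_e^2+\tfrac12 Z_e^2+\tfrac{1}{2\gamma_h}\tilde\beta^2+\tfrac{1}{2\gamma_v}\tilde\alpha^2 .
\]
Lemma~\ref{lemma: stability adaptation} (nonincreasing $V_a$ under zero perturbation, plus the projection property $\tilde\vartheta\,\proj(\hat\vartheta,\tau)\ge\tilde\vartheta\tau$) yields bounded errors and decaying $\tilde\alpha,\tilde\beta$.

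\textbf{Step 2: the cross-track error.} Write $\rho_h=\Delta_h/\sqrt{\Delta_h^2+k_1^2\sigmu{Y_e}^2}$. Expanding \eqref{eq: proof-error-1} gives
\[
\dot Y_e=-U_h\rho_h\frac{k_1}{\Delta_h}\cos\tilde\beta\,\sigmu{Y_e}+U_h\rho_h\sin\tilde\beta .
\]
Since $|\tilde\beta|$ is bounded below $\pi/2$ (assuming $2M_\beta+\epsilon<\pi/2$), $\cos\tilde\beta\ge c_\beta>0$. Take $V_1=\tfrac12 Y_e^2$ and use $Y_e\sigmu{Y_e}\ge |Y_e|^{2+1/\mu}+|Y_e|^{2-1/\mu}$. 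The second term is bounded by $U_h|\sin\tilde\beta|\,|Y_e|$; apply Young's inequality to it and absorb part into the negative terms. This produces
\[
\dot V_1\le-\big(a V_1^{p}+bV_1^{q}\big)+\vartheta_1,\qquad p=1-\tfrac{1}{2\mu},\quad q=1+\tfrac{1}{2\mu},
\]
which is the form required by Lemma~\ref{lemma3} with $k=1$.

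\textbf{Main obstacle.} The factor $\rho_h$ decays like $|Y_e|^{-(1+1/\mu)}$ for large errors. The product $\rho_h\sigmu{Y_e}$ therefore saturates, and the superlinear term is lost globally. To obtain a global bound independent of the initial conditions, I would split into two phases.
\begin{itemize}
\item \emph{Far from the path}, $|k_1\sigmu{Y_e}|\ge\Delta_h$. There $\dot Y_e\,\sign(Y_e)\le -U_h(\cos\tilde\beta/\sqrt2-\sin|\tilde\beta|)$ is a constant negative rate. The time to enter this region is bounded, but only proportionally to $|Y_e(0)|$.
\item \emph{Near the path}, the region $|k_1\sigmu{Y_e}|<\Delta_h$. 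Here $\rho_h\ge1/\sqrt2$, so the inequality of Step 2 holds with $a,b$ scaled by $U_{h,\min}k_1/(\sqrt2\Delta_h)$. Lemma~\ref{lemma3} then gives convergence to the residual set \eqref{eq: fixed residual set} within $T_{max}$.
\end{itemize}
Because the entry time of the far-field phase grows with $|Y_e(0)|$, I expect strict initial-condition independence to be the weak point. I would first try to state the result for initial conditions in the near-field set, or argue that $\Delta_h$ is chosen large relative to the operating envelope. This mirrors how Lemma~\ref{lemma: stability no adaptation} must have been handled, so I would rely on that lemma's argument for this step.

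\textbf{Step 3: the vertical-track error.} The same treatment applies to $Z_e$ with $V_2=\tfrac12Z_e^2$. The additional term $d(\tilde\beta,Y_e)$ is bounded by $U_h|\sin\pi_v|\,(\sqrt{1+\tan^2M_\beta}+1)$, so it is folded into $\vartheta_2$. After $Y_e$ and $\tilde\beta$ have settled, $d$ shrinks, which tightens the residual set.

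\textbf{Conclusion.} Taking $V=V_1+V_2$ and using $(x+y)^{p}\le x^p+y^p$ together with $x^q+y^q\ge 2^{1-q}(x+y)^q$, the combined inequality is again of the form in Lemma~\ref{lemma3}. The bound
\[
T_{max}=\frac{1}{a\theta(1-p)}+\frac{1}{b\theta(q-1)}
\]
contains only gains, $\mu$, $\Delta_h$, $\Delta_v$, and $u_{\min}$ from Assumption~\ref{assum1}.
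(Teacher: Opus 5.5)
Your skeleton matches the paper's argument in Appendix~\ref{proof: convergence}, which is very terse. Projection keeps $\tilde\alpha,\tilde\beta$ uniformly bounded, the residual estimation error and $d(\tilde\beta,Y_e)$ are treated as a bounded disturbance, and Lemma~\ref{lemma3} gives practical fixed-time convergence.

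The gap is the one you flagged, and it cannot be closed the way you suggest. You defer the far-field phase to ``how Lemma~\ref{lemma: stability no adaptation} must have been handled.'' But Appendix~\ref{app: proof no adaptation} gets its global inequality only through an inverted bound. It states $\Omega_1=U_h/\sqrt{\Delta_h^2+k_1^2\sigmu{Y_e}^2}\le U_{h,\max}/\Delta_h$, then replaces $\Omega_1$ by $U_{h,\min}/\Delta_h$ inside $-\Omega_1k_1(\cdots)$, which would require a \emph{lower} bound. Since $\Omega_1k_1|\sigmu{Y_e}|\le U_h$, that step fails for large $|Y_e|$. This is exactly your saturation of $\rho_h\sigmu{Y_e}$.

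No other argument can rescue it either. By \eqref{eq: cross-track error dynamics}, $|\dot Y_e|\le U_h\le U_{h,\max}$ for every commanded heading. Reaching $\{|Y_e|\le r\}$ therefore takes at least $(|Y_e(0)|-r)/U_{h,\max}$, which is unbounded over initial conditions. So the global, initial-condition-independent $T_{max}$ in the statement is not provable. What your two-phase argument does establish is the correct result: a reaching time linear in $|Y_e(0)|$ (and $|Z_e(0)|$), followed by practical fixed-time convergence within a uniform $T_{max}$ from the near-field set. State it as a local/semi-global fixed-time theorem rather than expecting to recover the global one. Place the near/far threshold at $k_1|\sigmu{Y_e}|=c\Delta_h$ with $c>\tan(2M_\beta+\epsilon)$. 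At $c=1$ the far-field rate is guaranteed negative only if $|\tilde\beta|<\pi/4$, which is stronger than your assumption $\cos\tilde\beta>0$.

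Two further repairs are needed.

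\emph{The composite Lyapunov function.} Your $V_a=\tfrac12Y_e^2+\tfrac12Z_e^2+\tfrac1{2\gamma_h}\tilde\beta^2+\tfrac1{2\gamma_v}\tilde\alpha^2$ is not nonincreasing under \eqref{eq: AFxTLOS-adaptation-2}. The cross terms $U_h\rho_hY_e\sin\tilde\beta$ and $-\rho_hk_1\tilde\beta\sigmu{Y_e}$ do not cancel. Lemma~\ref{lemma: stability adaptation} uses $\int_0^{Y_e}\sigmu{\tau}\,d\tau+\frac{U_h}{\gamma_hk_1}(1-\cos\tilde\beta)$ and its $(Z_e,\tilde\alpha)$ analogue precisely so that they cancel. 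Even then it gives only a negative semidefinite derivative, i.e.\ asymptotic information. For the fixed-time claim, rely only on the projection bound. The residual set is then sized by $\sin(2M_\beta+\epsilon)$, $\sin(2M_\alpha+\epsilon)$ and $\sup|d|$, and is ``small'' only when those are.

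\emph{The coupling term in the vertical channel.} Folding $d$ into $\vartheta_2$ handles only the $Z_e$ near field. In the $Z_e$ far field, the reaching rate, of order $U_v(\cos\tilde\alpha-|\sin\tilde\alpha|)/\sqrt2$, must dominate $|d|\le 2U_h|\sin\pi_v|$, at least once $Y_e$ has entered its near-field set. This cascade condition on $\pi_v$, $M_\alpha$ and $M_\beta$ has to be stated explicitly.
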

\begin{proof}
    See Appendix \ref{proof: convergence}. 
\end{proof}

An alternative version of AFxTLOS involving a time-varying look-ahead distance is also proposed in this paper. The constant distance term $\Delta$ in \eqref{eq: AFxTLOS-guidance-proj-1} and \eqref{eq: AFxTLOS-guidance-proj-2} are replaced with an exponential function:
\begin{align}
    \Delta_h &= (\Delta_{h_{max}}-\Delta_{h_{min}})\exp{(-k_{\Delta_y}Y_e^2)}+\Delta_{h_{min}}\\
    \Delta_v &= (\Delta_{v_{max}}-\Delta_{v_{min}})\exp{(-k_{\Delta_z}Z_e^2)}+\Delta_{v_{min}}
\end{align}
in which in both vertical and horizontal planes, $\Delta_{max}$ and $\Delta_{min}$ are the maximum and minimum desired values of the look-ahead distance $\Delta$, respectively, and $k_\Delta>0\in\mathbb{R}$ is a design parameter. The basic strategy of the time-varying look-ahead distance mechanism is to assign a small distance value when the vehicle is far from the reference path, thereby making the path tracking more aggressive, and to have a large look-ahead distance if the vehicle is close to the target path, thus lowering the potential overshoots in tracking. The stability in Theorem \ref{theorem: convergence} can still be guaranteed for the AFxTLOS with time-varying look-ahead distance since the exponentially varying distances are bounded.

\section{Results}\label{Result}
This section evaluates the performance of the proposed guidance law and adaptation law via numerical simulations and field tests. The proposed control methodology is compared with the ALOS scheme presented in \cite{Fossen2024} to validate its convergence rate and path tracking accuracy. 

\subsection{Simulation}
The numerical simulation environment is built upon the \textit{Marine Systems Simulator} \cite{MSS} and utilizes a REMUS-100 AUV \cite{UUV2}, which is kept at a nearly constant speed of $2\:\si{m/s}$. The vehicle has a PID controller implemented to regulate the pitch and yaw angles by adjusting the angles of the vertical and horizontal rudders:
\begin{equation}
\begin{split}
    \delta_v = -k_{p_v}\wrap{(\theta-\theta_d)}&-k_{d_v}(\dot{\theta}-\dot{\theta}_d)\\&-k_{i_v}\int_0^t{\wrap(\theta-\theta_d)}d\tau
    \label{eq:vertical_pid} 
\end{split}
\end{equation}
\begin{equation}
\begin{split}
    \delta_h = -k_{p_h}\wrap{(\psi-\psi_d)}&-k_{d_h}(\dot{\psi}-\dot{\psi}_d)\\&-k_{i_h}\int_0^t{\wrap(\psi-\psi_d)}d\tau
    \label{eq:horizontal_pid}
\end{split}
\end{equation}
where $\wrap(\cdot)$ is a function that wraps angles to the interval $[-\pi,\pi)$, defined as
\begin{equation}
    \wrap(\theta)= [(\theta+\pi)\;\bmod\; 2\pi]-\pi
\end{equation}
The controller gains have been tuned to track the target attitudes in the simulator, and the numerical study in this work is built using the default settings of the pitch and yaw controllers. 

\begin{table*}
    \caption{Controller setups of ALOS, AFxTLOS, and TVLD-AFxTLOS guidance in simulation}
    \centering
    \begin{tabular}{||c|c||c|c|c||}
    \hline
         &Notation& ALOS & AFxTLOS & TVLD-AFxTLOS\\
         \hline\hline
         $\Delta_h,\Delta_v$ & Look-ahead distance & 20, 20 & 20, 20 & N/A \\
         \hline
         $\gamma_h, \gamma_v$ & Adaptive gain & 0.002,0.002 & 0.002, 0.002 & 0.002, 0.002\\
         \hline
         $k_1, k_2$ & Fixed-time controller gain & N/A & 1, 1 & 1, 1\\
         \hline
         $\mu$ & Fixed-time function parameter & N/A & 2 & 2\\
         \hline 
         $k_{\Delta_y},k_{\Delta_z}$ & Time-varying look-ahead distance gain & N/A & N/A & 1, 1 \\
         \hline 
         $\Delta_{h_{max}},\Delta_{h_{min}},\Delta_{v_{max}},\Delta_{v_{min}}$ & Time-varying look-ahead distance bound & N/A & N/A & 30, 10, 30, 10 \\
    \hline
    \end{tabular}
    \label{tab:controller setup}
\end{table*}

\begin{figure}
    \centering
    \includegraphics[width=1\linewidth]{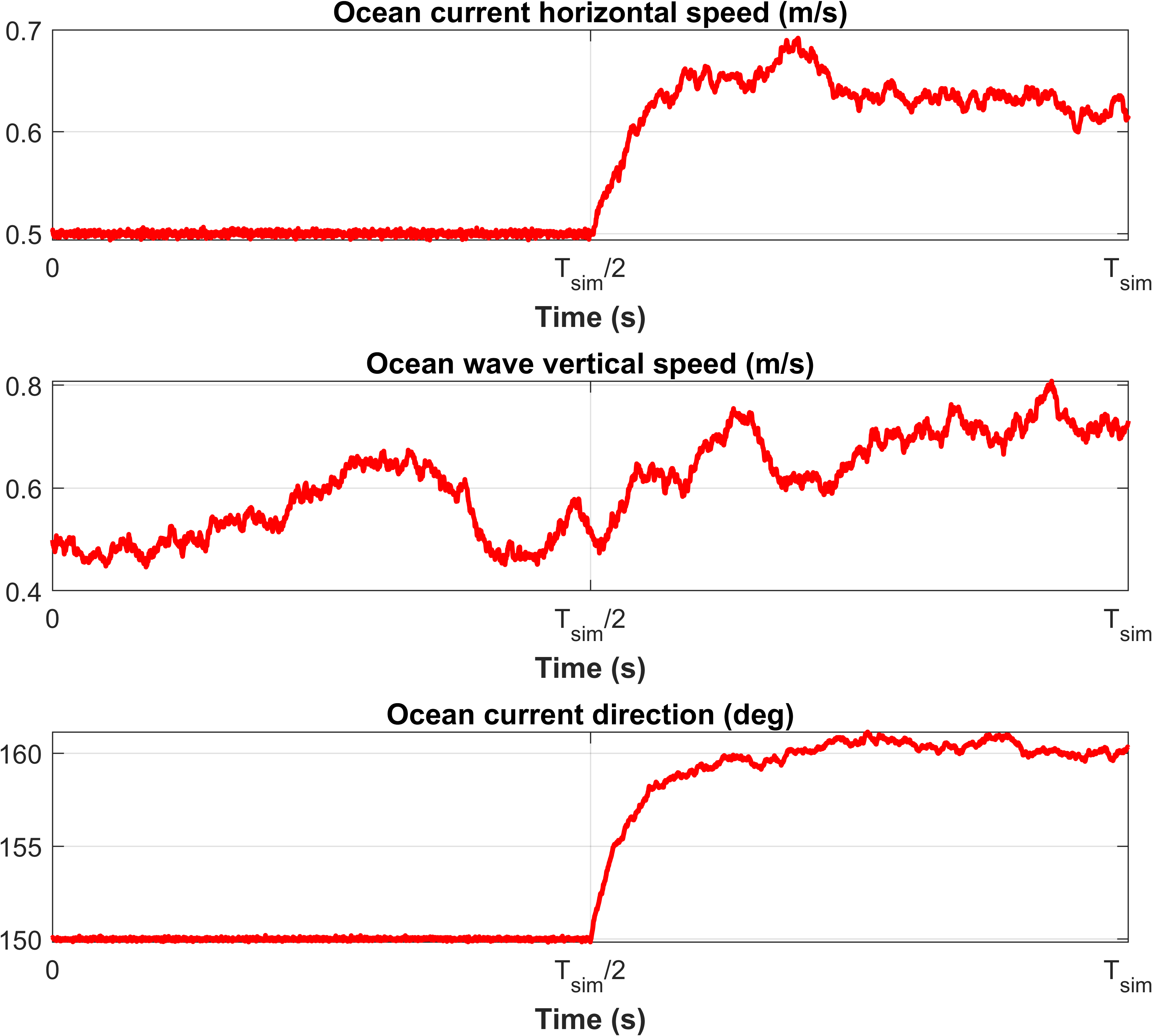}
    \caption{Time-varying environmental disturbances applied to the simulated AUV in the numerical study. }
    \label{fig:Ocean current}
\end{figure}

This study uses the ALOS method presented in \cite{Fossen2024} as the baseline and examines the proposed AFxTLOS guidance controller and its variation with time-varying look-ahead distance (TVLD-AFxTLOS). The parameter setups of the controllers are listed in Table. \ref{tab:controller setup}. When following the reference path defined by a series of waypoints, the guidance logic advances to the next waypoint as the vehicle enters the predefined neighborhood of the current target. Such a neighborhood is determined by
\begin{equation}
    \sqrt{(X-X_{n})^2+(Y-Y_{n})^2+(Z-Z_{n})^2}\leq R
     \label{eq:wpt_acceptance_R}
\end{equation}
where $R>0\in\mathbb{R}$ is the target acceptance radius. To further validate the robustness of the proposed method in the presence of environmental perturbations, currents and waves are added to impact the vehicle's stability. As shown in Fig. \ref{fig:Ocean current}, a horizontal current with a constant speed of $0.5\:\si{m/s}$ is initially added to the environment at a constant angle of $150$ degrees. After running half of the simulation, the current is turned to time-varying signal by adding a normally distributed noise with an average value of $0.15 \:\si{m/s}$ and standard deviation of $0.02$, and its direction is changed to be normally distributed with an average value of $160$ degrees and standard deviation of $0.1$. In the vertical plane, the ocean wave is simulated as a uniformly distributed random signal ranging in $[0.4,0.8]\:\si{m/s}$. With this setup, the simulation examines the controller's robustness in the presence of both constant unidirectional disturbances and time-varying perturbations. The simulations run at a frequency of $10 \si{Hz}$. Two scenarios are set to test the controllers' capability and robustness of tracking straight and curved paths.

\subsubsection{Straight path tracking}

\begin{figure*}[!ht]
    \centering
    \begin{subfigure}[t]{0.32\textwidth}
        \centering
        \includegraphics[width=\linewidth]{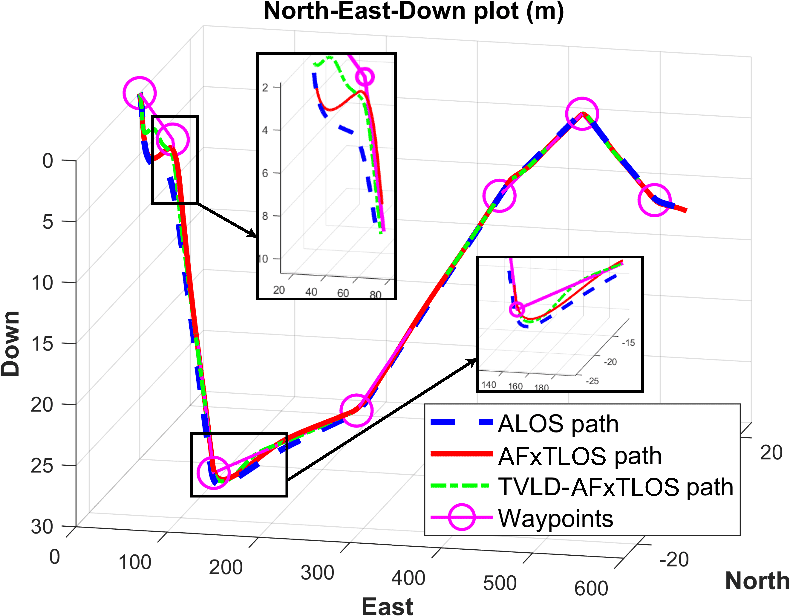}
        \caption{3D path tracking}
        \label{fig:straight 3d}
    \end{subfigure}
    \hfill
    \begin{subfigure}[t]{0.32\textwidth}
        \centering
        \includegraphics[width=\linewidth]{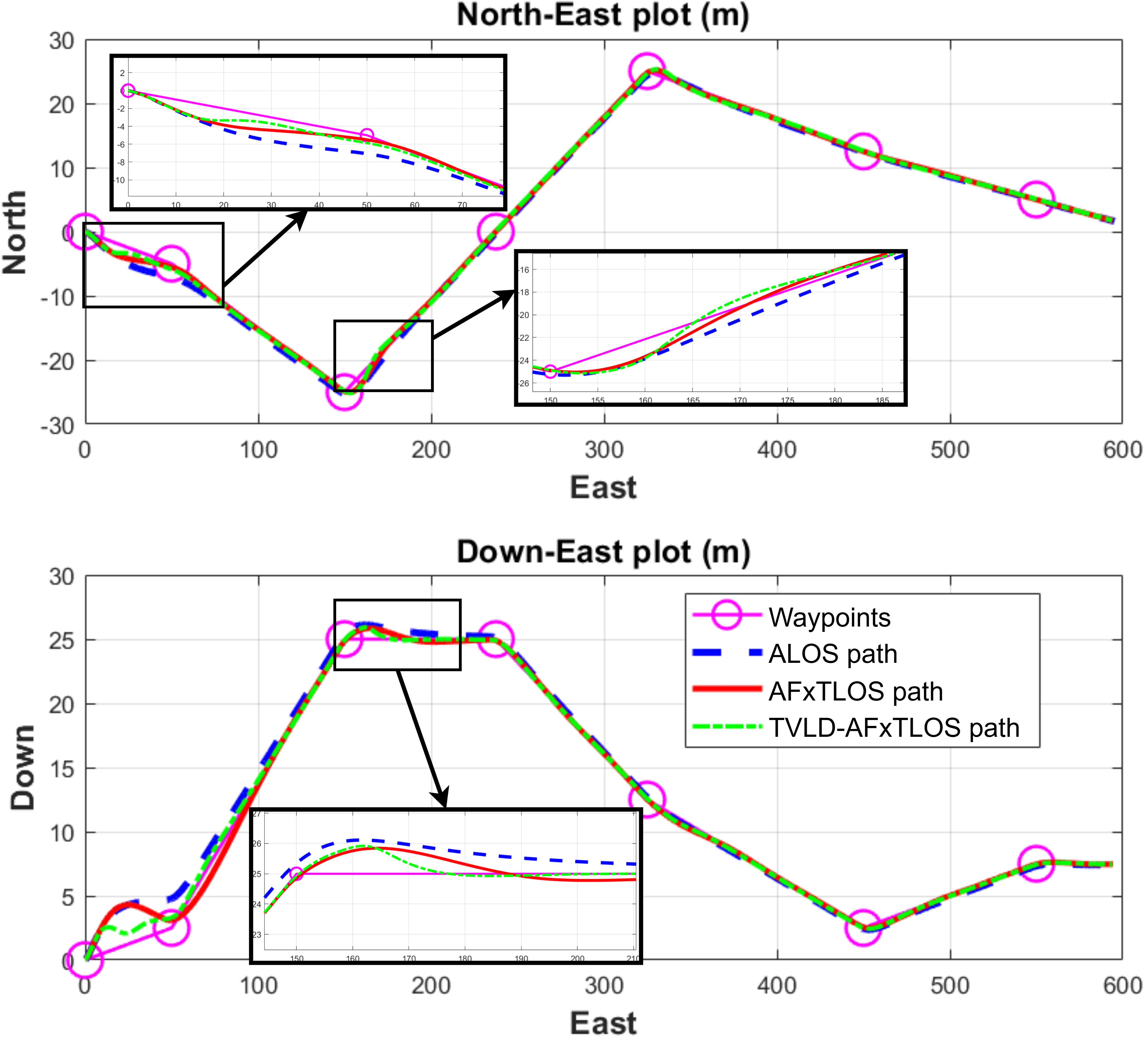}
        \caption{2D path tracking}
        \label{fig:straight 2d}
    \end{subfigure}
    \hfill
    \begin{subfigure}[t]{0.32\textwidth}
        \centering
        \includegraphics[width=\linewidth]{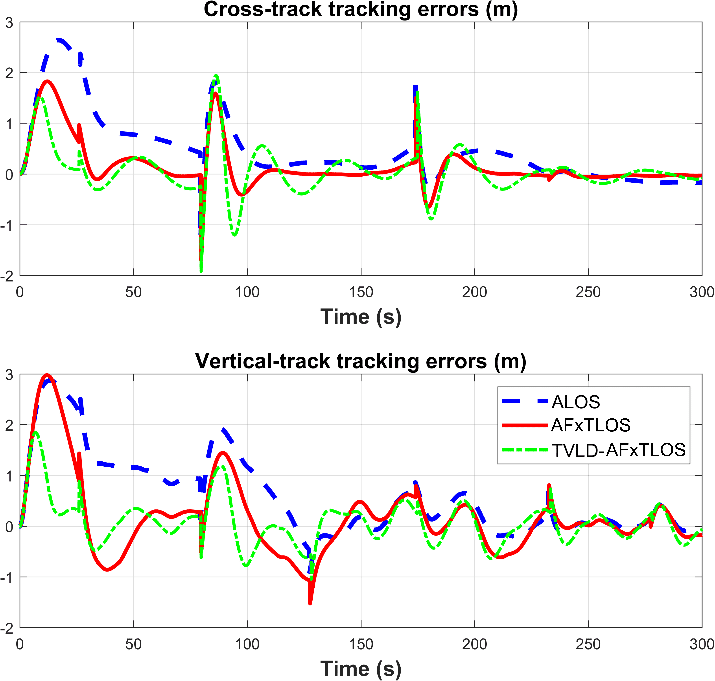}
        \caption{Tracking errors}
        \label{fig:straight error}
    \end{subfigure}

    \vspace{0.2cm} 

    \begin{subfigure}[t]{0.32\textwidth}
        \centering
        \includegraphics[width=\linewidth]{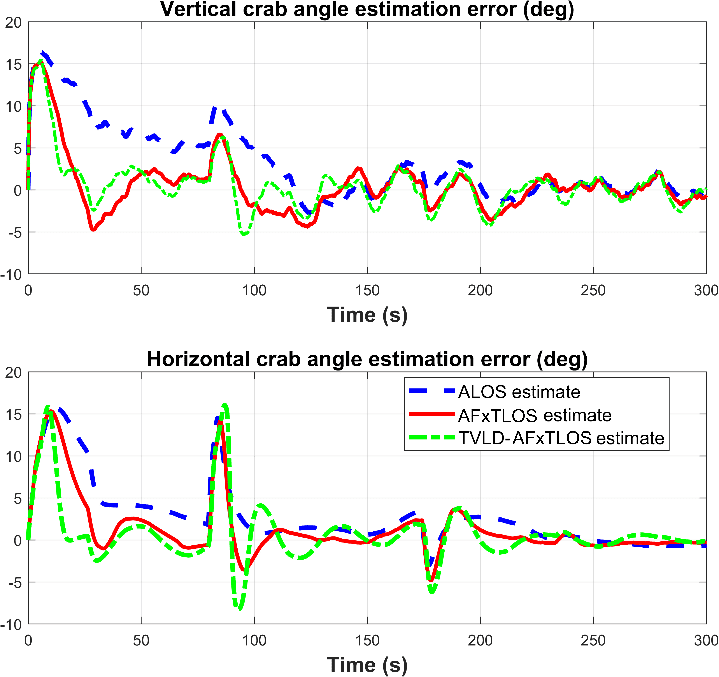}
        \caption{Crab angle adaptations}
        \label{fig:straight crab}
    \end{subfigure}
    \hfill
    \begin{subfigure}[t]{0.32\textwidth}
        \centering
        \includegraphics[width=\linewidth]{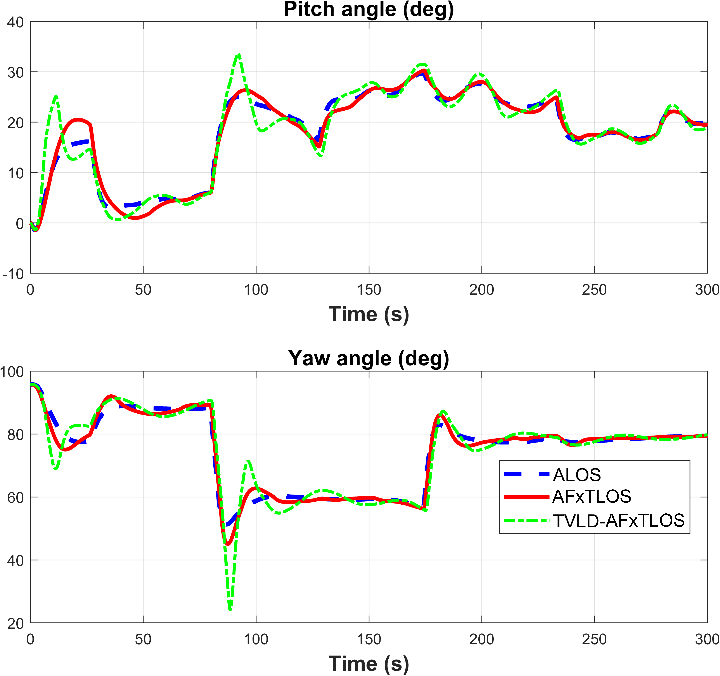}
        \caption{Control efforts (pitch and yaw)}
        \label{fig:straight pitch yaw}
    \end{subfigure}
    \hfill
    \begin{subfigure}[t]{0.32\textwidth}
        \centering
        \includegraphics[width=\linewidth]{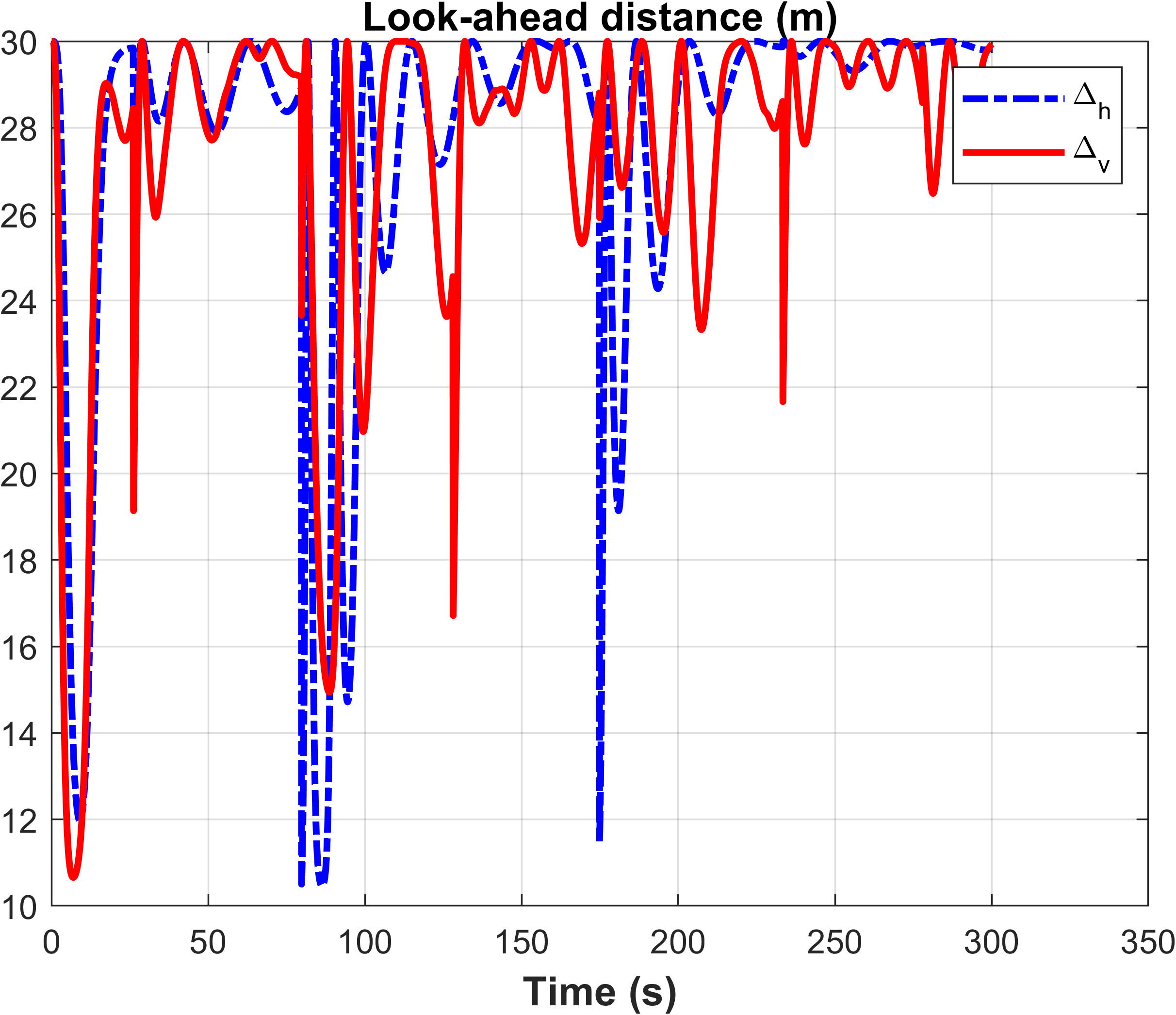}
        \caption{Look-ahead distances}
        \label{fig:straight look ahead}
    \end{subfigure}

    \caption{Performance evaluation of the straight path tracking scenario. (a) 3D spatial tracking demonstrates more precise tracking by the proposed fixed-time guidance scheme compared to ALOS. (b) 2D planar tracking highlights more rapid path convergence in both horizontal and vertical planes. (c) Cross- and vertical-track errors exhibit rapid, accurate convergence to zero due to the fixed-time control mechanism. (d) Crab angle estimations show faster and more precise angle compensation. (e) Pitch and yaw angle adjustments illustrate the more aggressive control efforts required by the fixed-time schemes. (f) Time-varying look-ahead distances gradually approach maximum allowable values as the tracking error converges to zero.}
    \label{fig:straight_combined}
\end{figure*}

The first simulation evaluates the convergence capability of the proposed guidance schemes for straight-line path following. The reference trajectory consists of a series of waypoints:
$[0,0,0]^T\rightarrow[-5,50,2.5]^T\rightarrow[-25,150,25]^T
\rightarrow[0,240,25]^T\rightarrow[25,325,12.5]^T
\rightarrow[12.5,450,2.5]^T\rightarrow[5,550,7.5]^T$.
The AUV starts from a stationary condition, $\eta(0)=\nu(0)=0$, and the waypoint acceptance radius is set to $R=4\:\si{m}$.

Figures \ref{fig:straight 3d}--\ref{fig:straight error} demonstrate that both fixed-time guidance schemes achieve faster path convergence than ALOS in both horizontal and vertical planes. While ALOS gradually reduces the tracking error due to its asymptotic convergence property, AFxTLOS and TVLD-AFxTLOS rapidly drive the cross-track and vertical-track errors toward zero. TVLD-AFxTLOS provides the highest tracking accuracy near waypoint transitions by adapting the look-ahead distance according to the tracking error. The RMSE comparison further confirms this improvement: ALOS results in cross-track and vertical-track errors of $0.81\:\si{m}$ and $0.99\:\si{m}$, respectively, whereas AFxTLOS reduces them to $0.47\:\si{m}$ and $0.79\:\si{m}$, and TVLD-AFxTLOS achieves $0.44\:\si{m}$ and $0.43\:\si{m}$.

The disturbance estimation performance is evaluated through crab angle adaptation. As shown in Fig. \ref{fig:straight crab}, the fixed-time adaptation law enables faster convergence and lower estimation errors compared with ALOS. Specifically, the RMSE values of horizontal and vertical crab angle estimation are reduced from $5.44^\circ$ and $4.72^\circ$ with ALOS to $3.46^\circ$ and $3.78^\circ$ with AFxTLOS, and further to $3.15^\circ$ and $3.63^\circ$ with TVLD-AFxTLOS. These improvements demonstrate that rapid disturbance estimation directly contributes to improved path tracking performance.

The desired pitch and yaw commands generated by different guidance laws are compared in Fig. \ref{fig:straight pitch yaw}. The fixed-time methods require more aggressive attitude adjustments to achieve faster convergence, indicating a trade-off between tracking accuracy and control effort. Although this may increase energy consumption, it provides improved transient performance for missions requiring rapid trajectory recovery.

The evolution of the adaptive look-ahead distances is shown in Fig.~\ref{fig:straight look ahead}. During the initial convergence phase, the horizontal and vertical look-ahead distances decrease to provide more aggressive guidance corrections when the tracking errors are large. As the vehicle approaches the desired path and the tracking errors converge, both look-ahead distances gradually increase toward their maximum allowable values. This adaptive behavior allows TVLD-AFxTLOS to achieve a balance between rapid convergence during transient motion and smooth path following during steady-state tracking.

\begin{figure*}[!ht]
    \centering
    \begin{subfigure}[t]{0.32\textwidth}
        \centering
        \includegraphics[width=\linewidth]{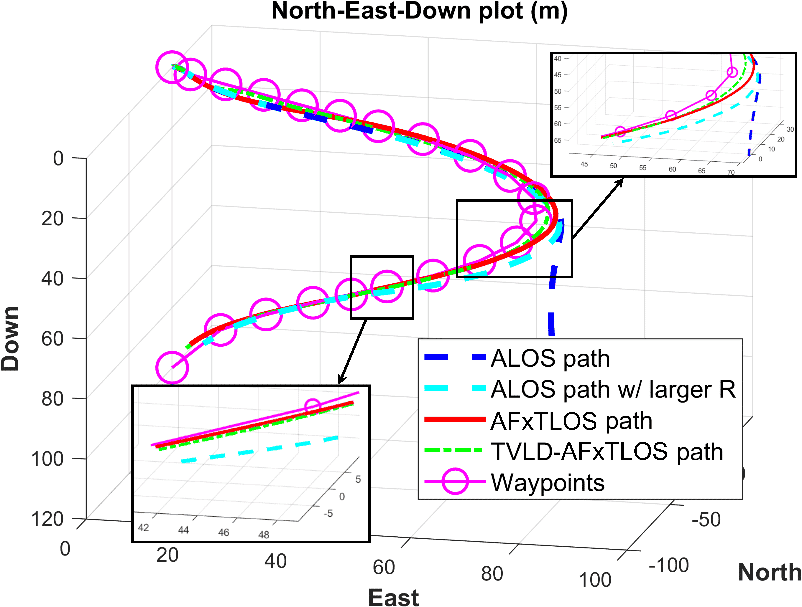}
        \caption{3D path tracking}
        \label{fig:curved 3d}
    \end{subfigure}
    \hfill
    \begin{subfigure}[t]{0.32\textwidth}
        \centering
        \includegraphics[width=\linewidth]{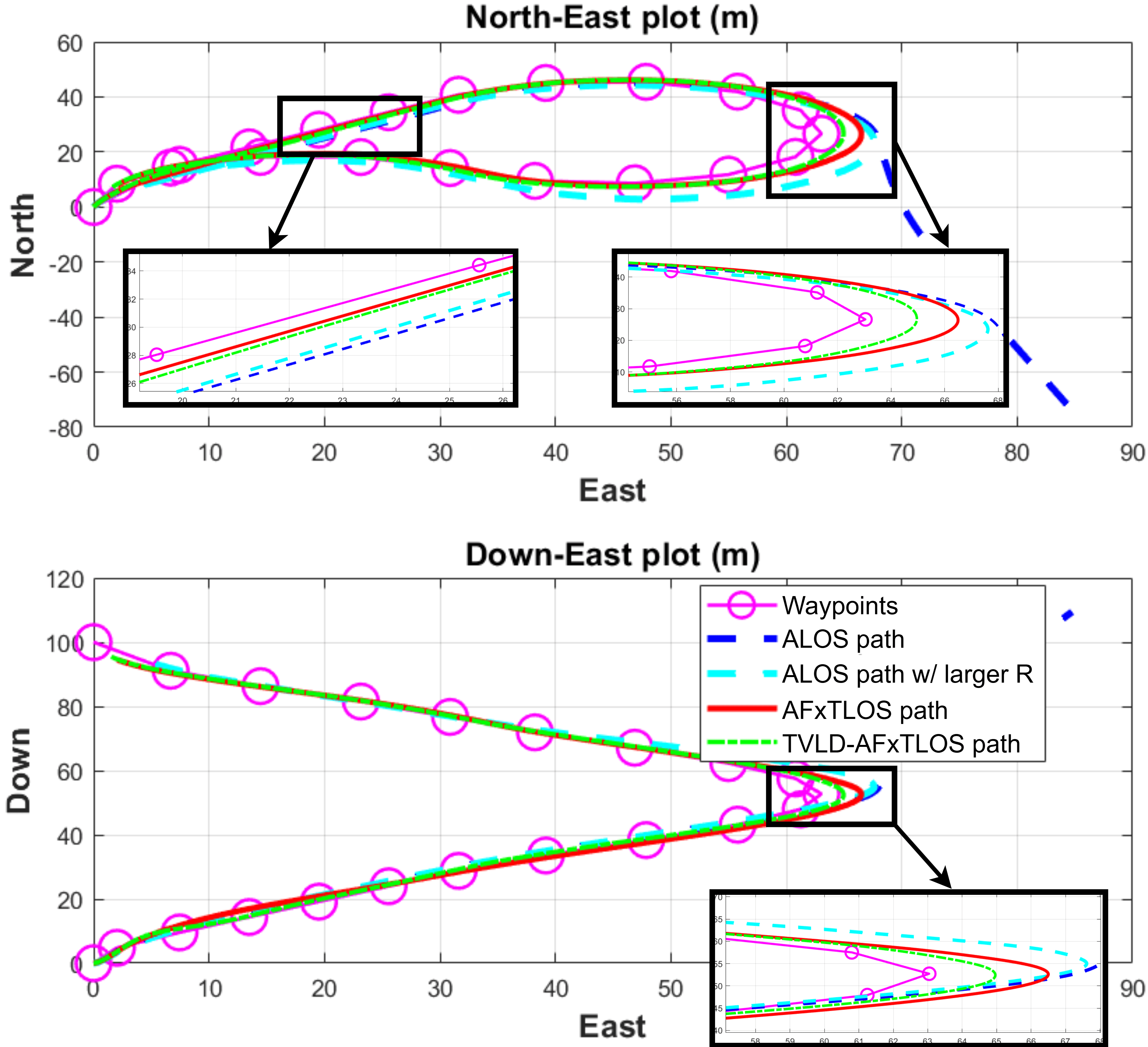}
        \caption{2D path tracking}
        \label{fig:curved 2d}
    \end{subfigure}
    \hfill
    \begin{subfigure}[t]{0.32\textwidth}
        \centering
        \includegraphics[width=\linewidth]{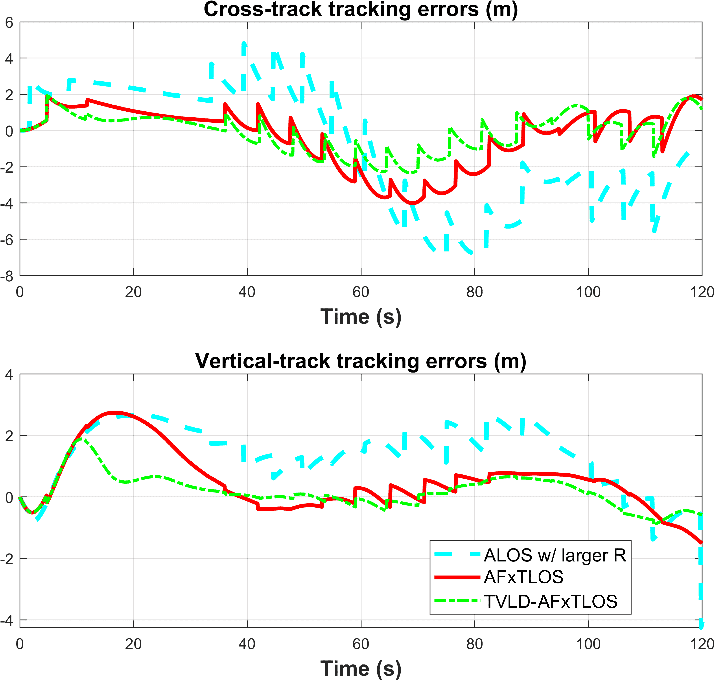}
        \caption{Tracking errors}
        \label{fig:curved error}
    \end{subfigure}

    \vspace{0.2cm} 

    \begin{subfigure}[t]{0.32\textwidth}
        \centering
        \includegraphics[width=\linewidth]{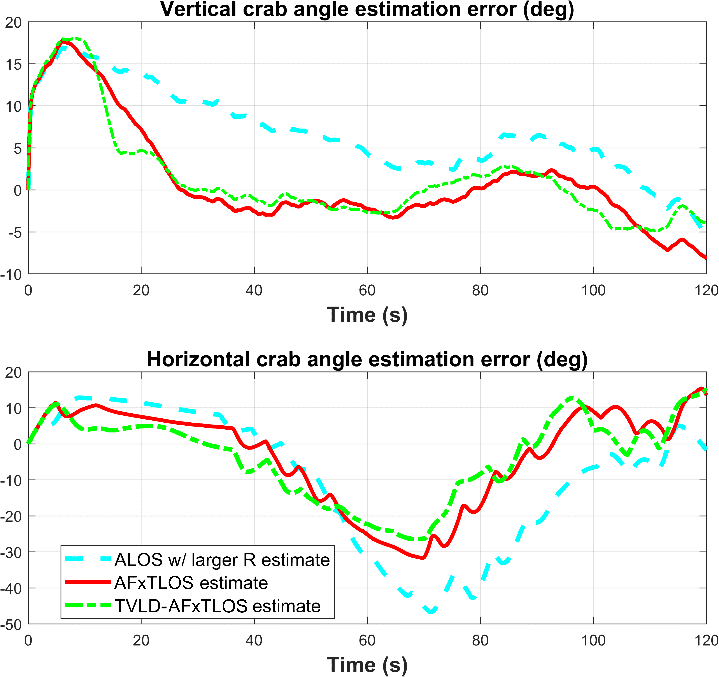}
        \caption{Crab angle adaptations}
        \label{fig:curved crab}
    \end{subfigure}
    \hfill
    \begin{subfigure}[t]{0.32\textwidth}
        \centering
        \includegraphics[width=\linewidth]{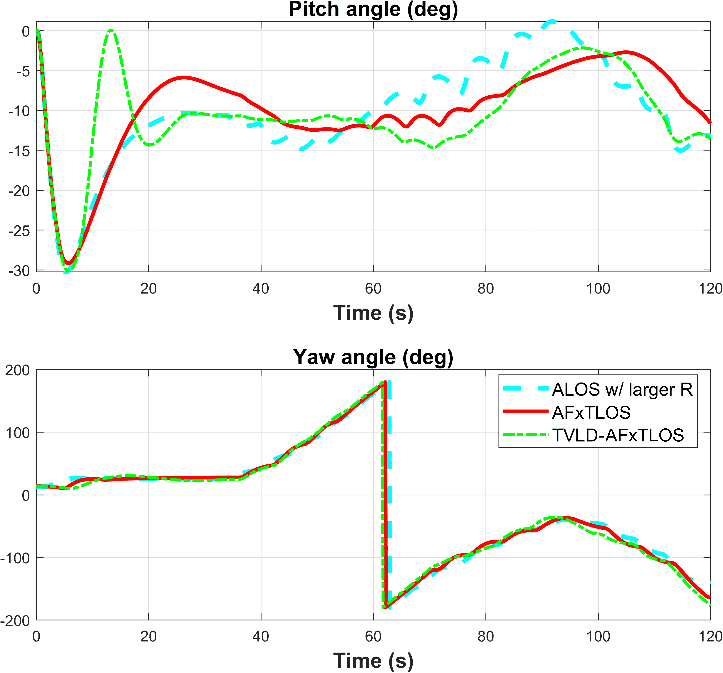}
        \caption{Control efforts (pitch and yaw)}
        \label{fig:curved pitch yaw}
    \end{subfigure}
    \hfill
    \begin{subfigure}[t]{0.32\textwidth}
        \centering
        \includegraphics[width=\linewidth]{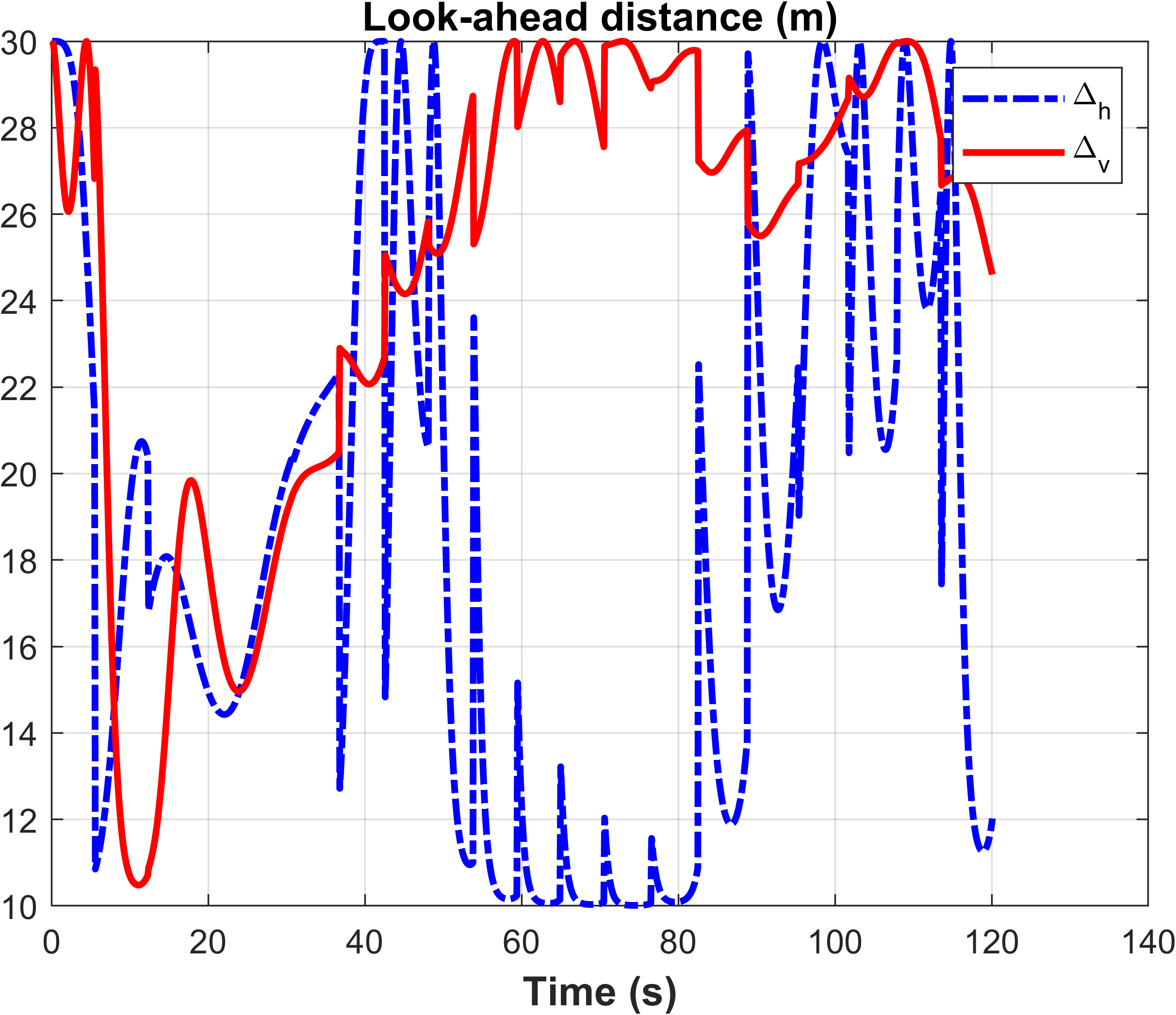}
        \caption{Look-ahead distances}
        \label{fig:curved look ahead}
    \end{subfigure}

    \caption{Performance evaluation of the curved path tracking scenario. (a) 3D spatial tracking demonstrates more precise tracking by the proposed fixed-time guidance scheme compared to the prior ALOS method. (b) 2D planar tracking shows more rapid path convergence in both horizontal and vertical planes. (c) Tracking errors illustrate the robust convergence properties of the fixed-time methods over ALOS. (d) Crab angle estimations demonstrate faster and more precise angle compensations in the vertical plane; horizontal adaptation exhibits chattering due to frequent waypoint switching, but still outperforms ALOS. (e) Control efforts show the more aggressive adjustments in pitch and yaw angles required by the fixed-time schemes. (f) Time-varying look-ahead distances reveal that vertical distance approaches maximum values upon convergence, whereas horizontal distance fluctuates between extremes due to rapid waypoint switching along the curve.}
    \label{fig:curved_combined}
\end{figure*}

\subsubsection{Curved path tracking}

The second simulation investigates the performance of the proposed methods in a more challenging curved-path scenario. The reference trajectory is generated using a Dubins path with a minimum turning radius of $20\:\si{m}$ and connects the initial position with a final waypoint $[0,0,100]^T$ while requiring a $180^\circ$ heading change. The waypoint acceptance radius is maintained at $R=4\:\si{m}$.

Figures \ref{fig:curved 3d}--\ref{fig:curved error} show that the advantage of fixed-time guidance becomes more significant during curved-path tracking. Due to accumulated disturbance effects and slow convergence, ALOS fails to maintain the desired trajectory under the original waypoint switching condition. Even after increasing its acceptance radius to $R=8\:\si{m}$, ALOS exhibits larger deviation from the reference path. In contrast, AFxTLOS and TVLD-AFxTLOS successfully complete the trajectory while maintaining bounded tracking errors. Among the proposed methods, TVLD-AFxTLOS achieves the best performance by dynamically adjusting the look-ahead distance, resulting in RMSE values of $1.04\:\si{m}$ and $0.56\:\si{m}$ for cross-track and vertical-track errors, respectively. AFxTLOS achieves comparable performance with RMSE values of $1.67\:\si{m}$ and $1.08\:\si{m}$, while ALOS results in significantly larger errors of $3.40\:\si{m}$ and $1.72\:\si{m}$.

The crab angle estimation results in Fig. \ref{fig:curved crab} further highlight the robustness of the fixed-time adaptation scheme under complex maneuvers. Compared with ALOS, the proposed methods provide faster disturbance compensation and lower estimation errors. The RMSE values of horizontal and vertical crab angle estimation are reduced from $8.46^\circ$ and $20.74^\circ$ with ALOS to $6.21^\circ$ and $13.87^\circ$ with AFxTLOS, and to $5.85^\circ$ and $11.95^\circ$ with TVLD-AFxTLOS. The larger estimation variation in the horizontal plane is caused by frequent waypoint switching and rapid yaw adjustments during turning maneuvers.

Figure \ref{fig:curved pitch yaw} compares the corresponding attitude commands. Similar to the straight-path case, the fixed-time methods generate more active control commands to achieve rapid error rejection. TVLD-AFxTLOS requires the largest pitch adjustments due to its improved trajectory-following accuracy, demonstrating the inherent trade-off between tracking precision and energy efficiency.

The corresponding look-ahead distance adaptation is illustrated in Fig.~\ref{fig:curved look ahead}. Unlike the straight-path case, the horizontal look-ahead distance exhibits frequent variations due to continuous waypoint transitions and curvature changes along the reference trajectory. By reducing the horizontal look-ahead distance during large cross-track deviations, TVLD-AFxTLOS generates stronger lateral corrections to improve path adherence during turning maneuvers. Meanwhile, the vertical look-ahead distance gradually increases as the vertical tracking error converges, resulting in smoother depth regulation. These results demonstrate that the time-varying look-ahead mechanism improves adaptability during complex 3D maneuvers.

\subsubsection{Simulation summary}

The simulation results demonstrate that fixed-time LOS guidance provides significant advantages over conventional adaptive LOS guidance for disturbed 3D AUV path following. The proposed methods achieve faster convergence, improved disturbance estimation, and reduced tracking errors without modifying the vehicle's internal attitude controller. The benefit of the proposed approach is particularly evident in curved-path tracking, where asymptotic convergence limits the performance of conventional ALOS. Compared with ALOS, TVLD-AFxTLOS reduces the tracking RMSE by $69.37\%$ in cross-track error and $67.46\%$ in vertical-track error, while reducing crab angle estimation RMSE by $32.27\%$ and $42.40\%$ for horizontal and vertical disturbances, respectively. The increased control activity required by fixed-time guidance represents a potential energy-efficiency trade-off, which will be considered in future optimization-based extensions.

\subsection{Field Experiments}

\subsubsection{Experimental setup}
To evaluate the proposed AFxTLOS guidance framework under realistic operating conditions, field experiments were conducted using an Iver 3 AUV (Fig.~\ref{fig:iver-pic}). The vehicle is a torpedo-shaped platform with a length of $2.16\:\si{m}$ and diameter of $15\:\si{cm}$, equipped with separate frontseat and backseat computers. Since the frontseat controller is a closed system responsible for low-level vehicle control, all proposed guidance algorithms were implemented on the backseat computer following the architecture previously used in underwater autonomy applications~\cite{IVER1,IVER2,IVER3}. The guidance module communicates with the frontseat controller through a serial NMEA interface by sending desired thrust, rudder, and elevator commands.

\begin{figure}
    \centering
    \includegraphics[width=1\linewidth]{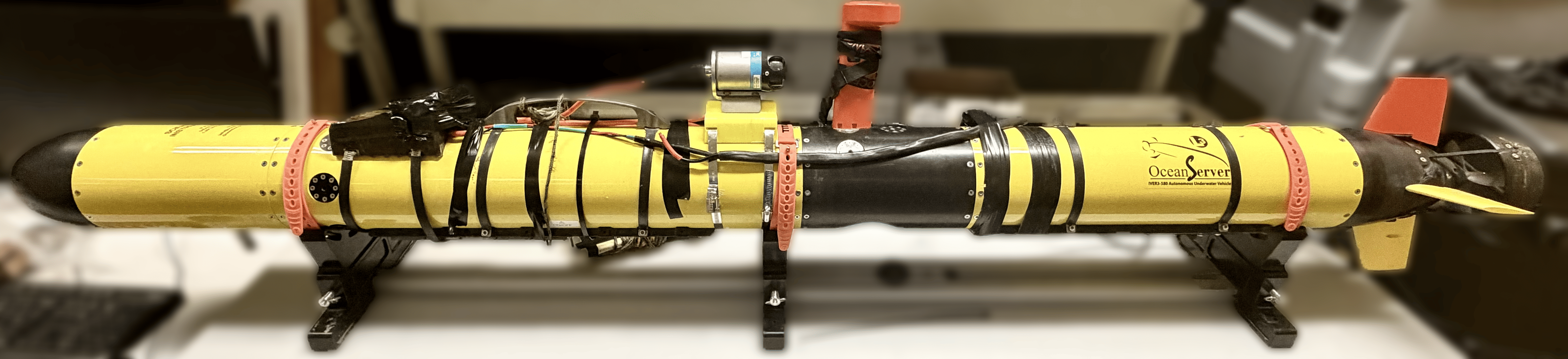}
    \caption{The Iver 3 AUV platform. }
    \label{fig:iver-pic}
\end{figure}

The vehicle is equipped with a DVL for underwater velocity estimation and onboard sensors for heading, depth, and position measurements. During underwater operation, the navigation estimate is maintained through dead reckoning and periodically corrected through surfacing events to obtain GPS measurements. The heading and depth loops are controlled by existing PID autopilots, which remain unchanged throughout all experiments. Therefore, the proposed method only modifies the guidance layer without requiring access to the internal vehicle controller.

The implementation was developed in ROS and executed at $5\:\si{Hz}$. The system consisted of a guidance node responsible for computing reference heading and pitch commands and a communication node responsible for sensor acquisition and command transmission. The controller parameters used for ALOS, AFxTLOS, and TVLD-AFxTLOS are summarized in Table~\ref{tab:controller_field_setup}.

{Validation consisted of three stages: (i) Gazebo-based verification of the ROS implementation and vehicle software integration, (ii) controller tuning during preliminary field deployments, and (iii) comparative field experiments evaluating ALOS, AFxTLOS, and TVLD-AFxTLOS. The final evaluation mission consisted of four waypoints, including surface start and end points and two intermediate underwater waypoints at a depth of $2\:\si{m}$. This trajectory generated three navigation segments involving descent, underwater tracking, and surfacing. During the ascent phase, the system acquired GPS signals for positioning to calibrate the position feedback of the vehicle control system. }

\subsubsection{{ROS software integration}}
The first stage of validation was conducted in simulation using Gazebo to verify the logic of the proposed guidance law and to evaluate its performance within the Iver3 software environment. The purpose of this stage was to validate the node architecture and overall system logic prior to field testing, reducing deployment effort and risk. An image of the Bicentennial Nature Area Lake along with its corresponding Gazebo simulation is shown in Fig. \ref{fig:real_sim_environments}.

\begin{figure}[!ht]
    \centering
    \includegraphics[width=1\linewidth]{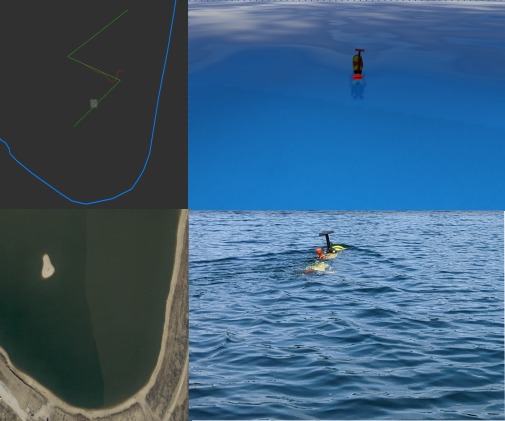}
    \caption{The top row illustrates the first stage of validation conducted prior to field deployments, using Gazebo and ROS for simulation-based testing. The bottom row shows the second stage of validation carried out at Fairfield Lakes and the Bicentennial Nature Area, where the majority of field deployments were performed.}
    \label{fig:real_sim_environments}
\end{figure}

\begin{table*}
    \caption{Setups of ALOS, AFxTLOS, and TVLD-AFxTLOS guidance in field tests}
    \centering
    \begin{tabular}{||c|c||c|c|c||}
    \hline
         &Notation& ALOS & AFxTLOS & TVLD-AFxTLOS\\
         \hline\hline
         $\Delta_h,\Delta_v$ & Look-ahead distance & 30, 30 & 30, 30 & N/A \\
         \hline
         $\gamma_h, \gamma_v$ & Adaptive gain & 0.002,0.002 & 0.002, 0.002 & 0.002, 0.002\\
         \hline
         $k_1, k_2$ & Fixed-time controller gain & N/A & 1, 1 & 1, 1\\
         \hline
         $\mu$ & Fixed-time function parameter & N/A & 2 & 2\\
         \hline 
         $k_{\Delta_y},k_{\Delta_z}$ & Time-varying look-ahead distance gain & N/A & N/Ah & 1, 1 \\
         \hline 
         $\Delta_{h_{max}},\Delta_{h_{min}},\Delta_{v_{max}},\Delta_{v_{min}}$ & Time-varying look-ahead distance bound & N/A & N/A & 50, 30, 50, 30 \\
    \hline
    \end{tabular}
    \label{tab:controller_field_setup}
\end{table*}

\subsubsection{{Controller tuning during field deployments}}
The second stage of validation was conducted at Fairfield Lakes and the Bicentennial Nature Area, both located in Indiana. This stage focused on field validation and controller tuning, including tuning of the depth and heading PID controllers under real environmental disturbances. These experiments also served to assess system robustness, sensor behavior, and overall closed-loop performance prior to formal comparative evaluation.


\subsubsection{{Comparative experiment 1: Moderate environmental disturbances}}
The first comparative experiment, conducted on Day 1, was performed under gentle breeze conditions, with an average wind speed of $4.6 \;\si{m/s}$ and gusts up to $6.7 \;\si{m/s}$. These conditions generated surface waves that significantly degraded the reliability of the GPS signal when the vehicle resurfaced. This behavior is visible Fig. \ref{fig:field_comparison_1030_horizontal} during the final leg of the trajectory, where localization becomes unstable upon surfacing. As a result, performance comparison during this segment becomes less reliable. However, the first two legs of the mission provide a consistent basis for evaluating controller performance.

From these segments, AFxTLOS demonstrates the best overall tracking performance, achieving rapid convergence to the predefined trajectory. In the first turn, the increased aggressiveness of AFxTLOS produces a slight overshoot, which delays convergence when transitioning to the second leg of the mission. {In contrast, the TVLD-AFxTLOS exhibits smoother behavior during turns and decreases the average tracking error from $4.49\:\si{m}$ by ALOS to $2.68\:\si{m}$. The variable look-ahead softens the response in these regions, reducing overshoot at the expense of a slightly slower alignment compared to AFxTLOS, which results in average tracking error of $1.96\:\si{m}$}.

In this run, the ALOS controller shows a more conservative turning behavior, leading to slower convergence toward the reference path. While this produces a smoother trajectory with limited overshoot, it delays alignment with the first and second legs.

Regarding the depth response, presented in Fig. \ref{fig:field_comparison_1030_vertical}, the differences among the controllers further highlight their transient characteristics. {ALOS results in an average tracking error in the vertical plane of $0.29\:\si{m}$, AFxTLOS leads to $0.48\:\si{m}$, and TVLD-AFxTLOS, $0.21\:\si{m}$}. During descent, all controllers achieve stable convergence toward the commanded $2\:\si{m}$ depth. However, the improved responsiveness of the AFxTLOS and TVLD-AFxTLOS controllers is accompanied by increased oscillatory behavior in the mid-depth segment, although the latter exhibits lower overshoot due to the varying looka-head. During ascent, minor oscillations are observed for all controllers, mainly because the vertical control input, $\delta_v$, must be adjusted to bring the vehicle to the surface for a GPS fix.

\begin{figure*}[!ht]
  \centering
    \begin{subfigure}[t]{0.48\textwidth}
        \centering
        \includegraphics[width=1.0\columnwidth]{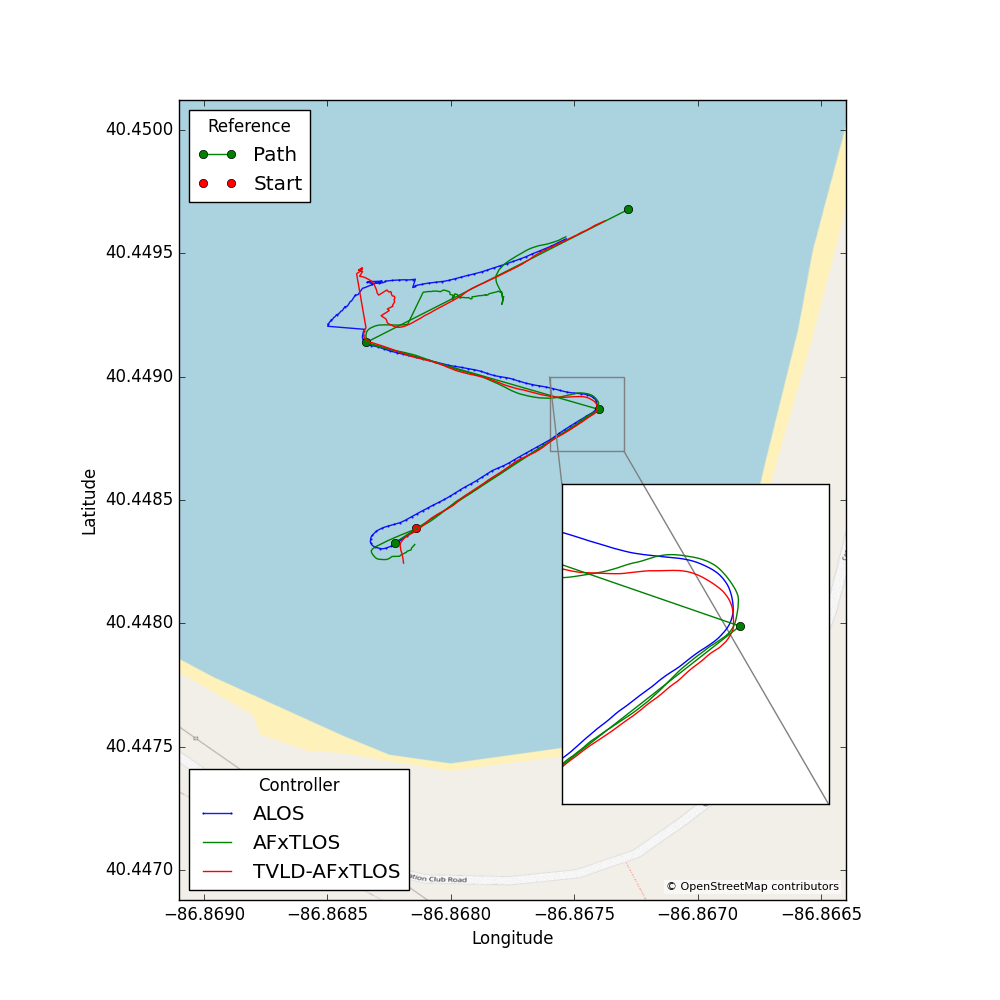}
        \caption{Horizontal trajectories}
        \label{fig:field_comparison_1030_horizontal}
    \end{subfigure}
    \hfill
    \begin{subfigure}[t]{0.48\textwidth}
        \centering
        \includegraphics[width=1.0\columnwidth]{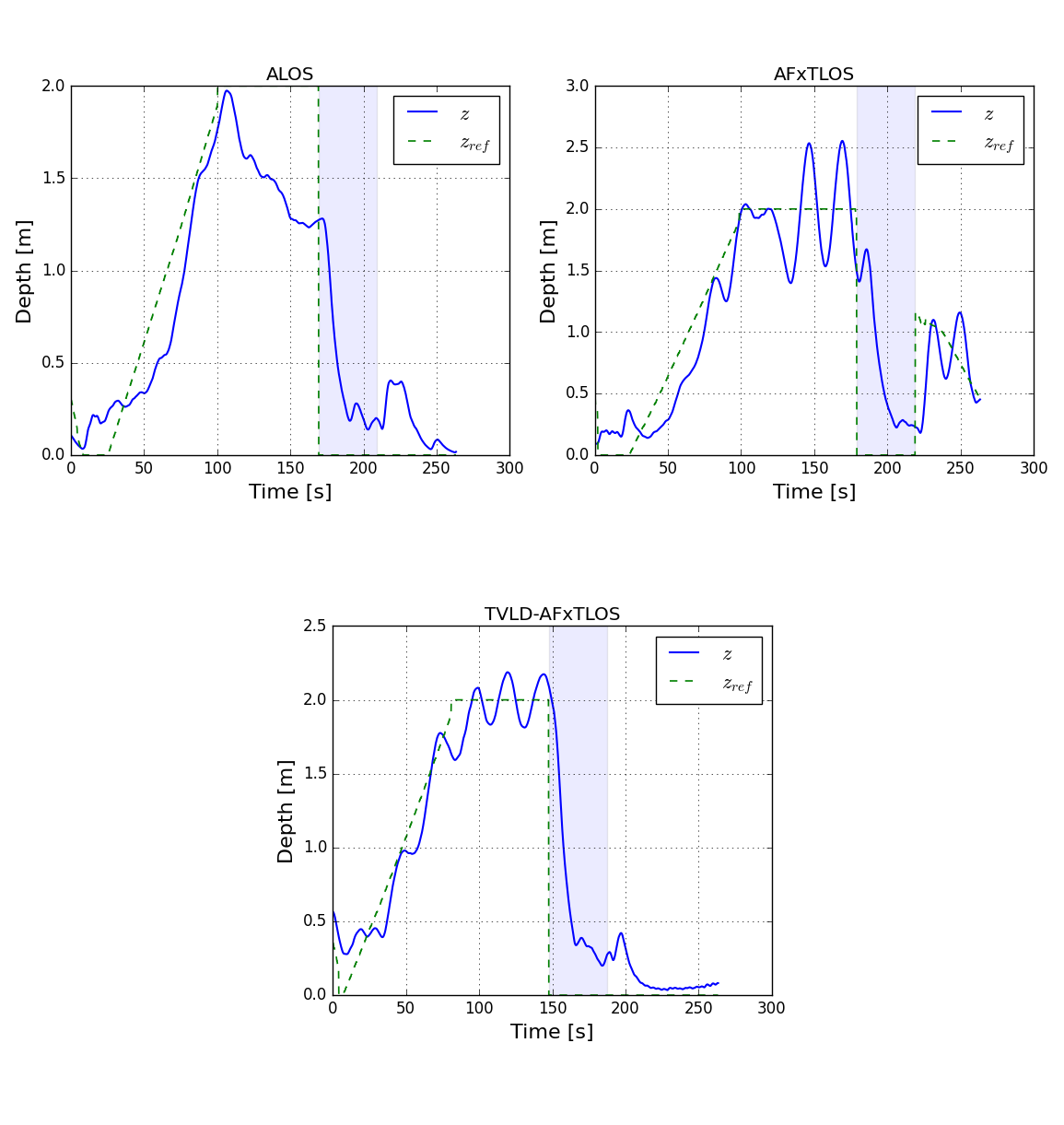}
        \caption{Depth profiles}
        \label{fig:field_comparison_1030_vertical}
    \end{subfigure}
    \caption{Left: Comparison of horizontal path-following for ALOS (blue), AFxTLOS (green), and TVLD-AFxTLOS (red). The AFxTLOS controller exhibits the best overall performance. The ALOS controller demonstrates good tracking capability with noticeable oscillatory behavior. The TVLD-AFxTLOS controller achieves the second-best performance. Right: Depth profiles corresponding to the same mission, illustrating the differences in transient response and convergence behavior across controllers.}
    \label{fig:field_comparison_1030}
\end{figure*}

The evolution of the crab angles, shown in Fig. \ref{fig:crab_1030}, provides further insight into the behavior of the adaptive guidance laws. In the vertical plane (see Fig. \ref{fig:crab_alpha_1030}), the adaptive estimate $\hat{\alpha}$ remains smooth across all controllers and captures the low frequency trend of $\alpha$, while attenuating high frequency fluctuations.

\begin{figure*}[!ht]
  \centering
    \begin{subfigure}[t]{0.48\textwidth}
        \centering
        \includegraphics[width=1.0\columnwidth]{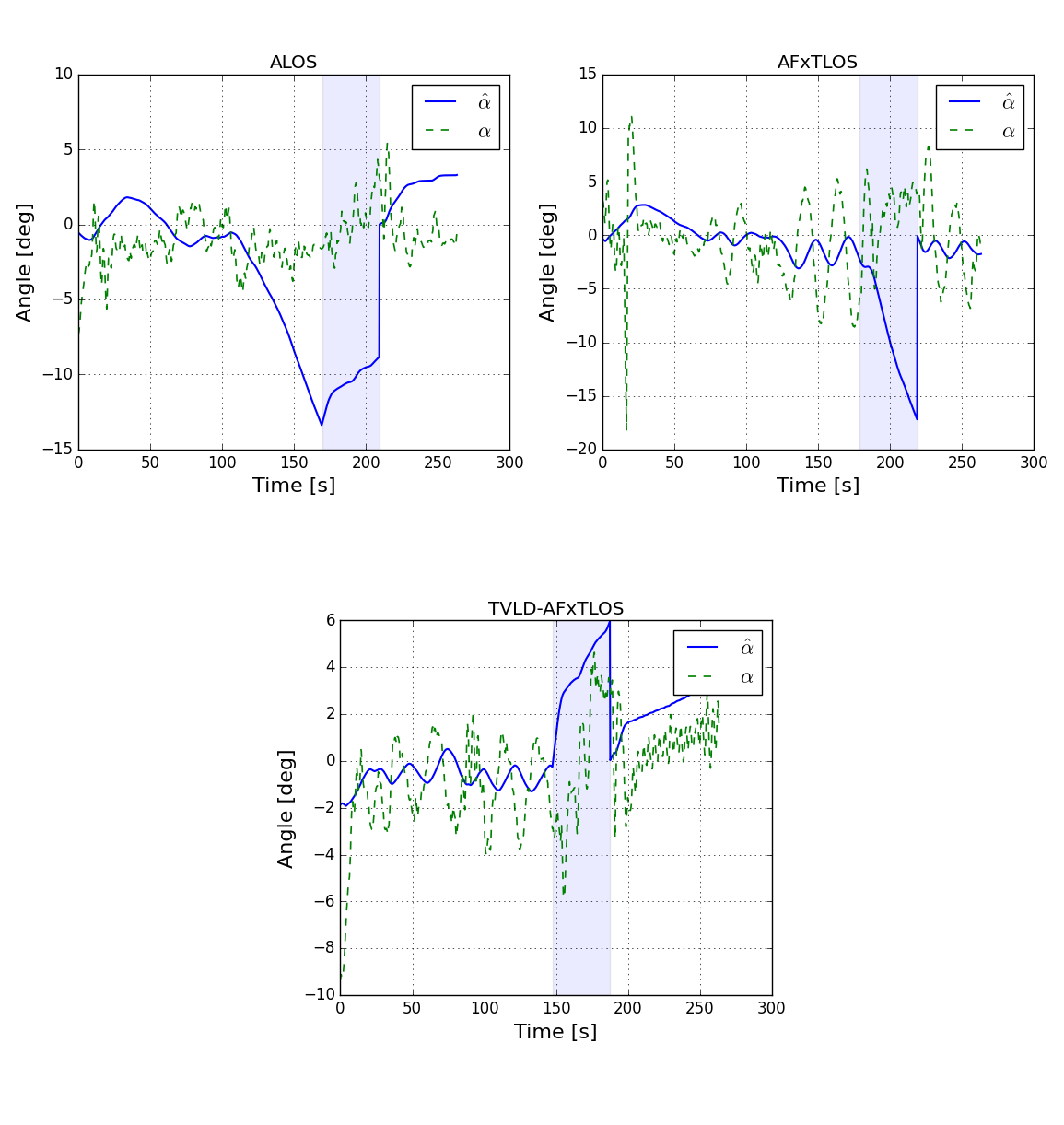}
        \caption{Vertical crab angle}
        \label{fig:crab_alpha_1030}
    \end{subfigure}
    \hfill
    \begin{subfigure}[t]{0.48\textwidth}
        \centering
        \includegraphics[width=1.0\columnwidth]{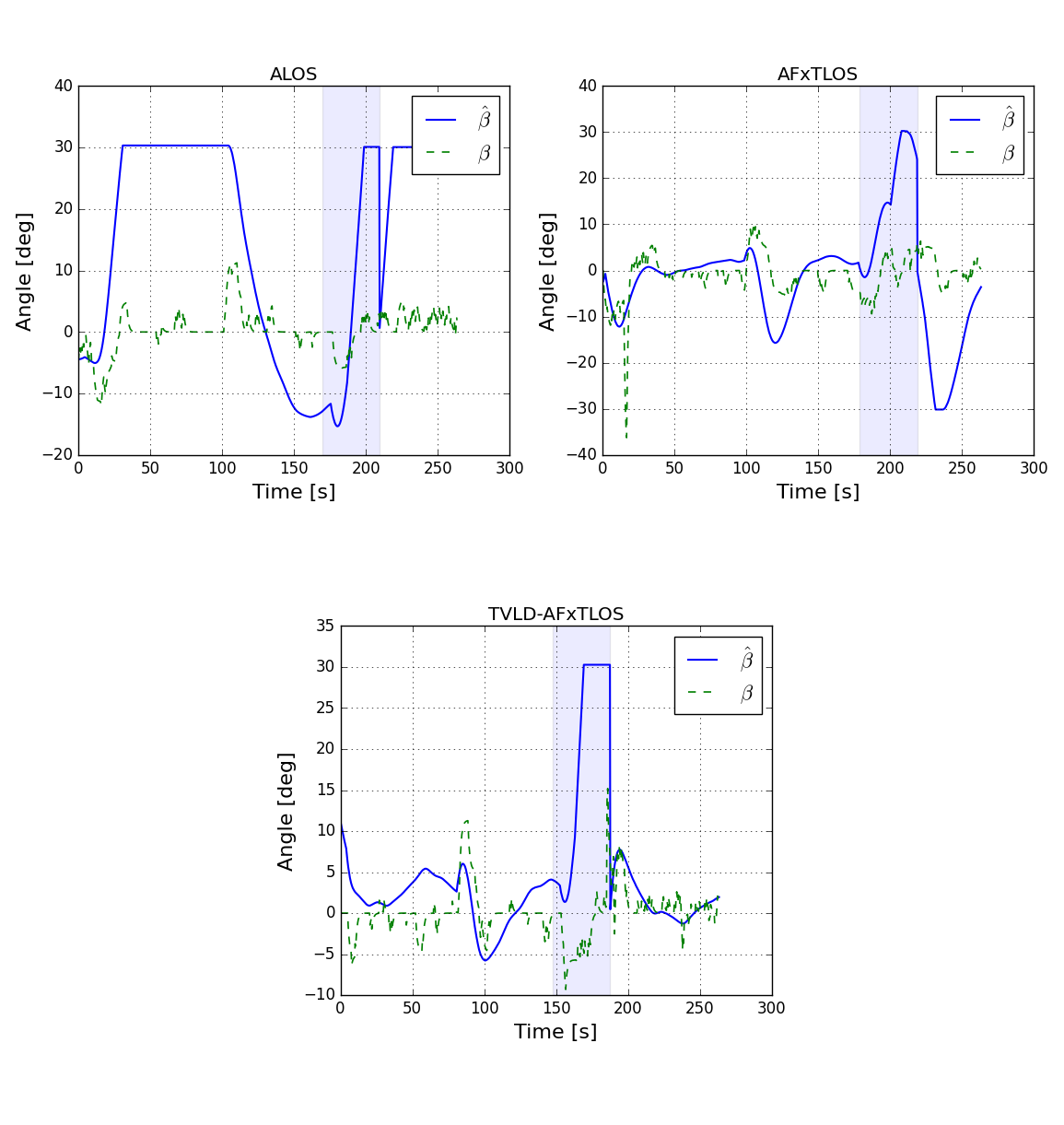}
        \caption{Horizontal crab angle}
        \label{fig:crab_beta_1030}
    \end{subfigure}
    \caption{(a): Comparison of vertical crab angles $\alpha$ (dashed green line) vs $\hat{\alpha}$ (blue line) for ALOS, AFxTLOS, and TVLD-AFxTLOS in Day 1. (b): Comparison of horizontal crab angles $\beta$ (dashed green line) vs $\hat{\beta}$ (blue line) for ALOS, AFxTLOS, and TVLD-AFxTLOS in Day 1.}
    \label{fig:crab_1030}
\end{figure*}

Abrupt variations in $\hat{\alpha}$ are observed during transitions between submerged and surface operation, highlighted by the shaded regions. Although the depth measurement itself is continuous due to the pressure sensor, these variations arise from an abrupt change in the reference depth originated by the switching between underwater and surface waypoints (blue shaded regions). Since the adaptation law depends on the vertical tracking error $Z_e$, the reference change induces corresponding transients in $\hat{\alpha}$. In general, AFxTLOS and TVLD-AFxTLOS closely follow the overall trend of $\alpha$, and, after the surfacing event, their estimates rapidly converge back to the measured value ($\alpha$). In contrast, ALOS struggles to capture the global trend and exhibits slower recovery following the surfacing event.

In the horizontal plane, the adaptive estimate $\hat{\beta}$ exhibits significantly larger variations. During submerged operation, $\hat{\beta}$ evolves smoothly for AFxTLOS and TVLD-AFxTLOS, following the overall trend. In contrast, ALOS quickly saturates to the maximum allowed values, which degrades performance, particularly during and after surfacing events. Large excursions and saturation of $\hat{\beta}$ are observed near surfacing events (blue shaded regions), which are caused by GPS corrections following dead-reckoning intervals. These corrections introduce abrupt changes in the horizontal tracking error $Y_e$, and since the adaptation law is driven by this error, they result in rapid variations in $\hat{\beta}$, bounded by the projection limits. Unlike ALOS, both AFxTLOS and TVLD-AFxTLOS are able to recover after surfacing. In particular, TVLD-AFxTLOS exhibits a smoother response with minimal overshoot compared to AFxTLOS. This behavior can be attributed to the time-varying look-ahead mechanism. As the cross-track error increases, the look-ahead distance $\Delta_h$ decreases, resulting in more aggressive guidance corrections. However, this reduction in $\Delta_h$ also decreases the effective gain of the adaptation law, as seen in the term $\Delta_h / \sqrt{\Delta_h^2 + k_1^2\sigmu{Y_e}^2}$. Consequently, the update of $\hat{\beta}$ is moderated during large error transients, such as those induced by GPS corrections, leading to smoother adaptive behavior and reduced overshoot.

Overall, the adaptive states $\hat{\alpha}$ and $\hat{\beta}$ capture the low frequency components associated with environmental disturbances and navigation drift. Among the evaluated strategies, TVLD-AFxTLOS exhibits smoother adaptive behavior, indicating improved robustness under hybrid underwater–surface operation.

\subsubsection{{Comparative experiment 2: Mild environmental disturbances}}
The second comparative experiment, conducted on Day 2, was performed under light air conditions, with an average wind speed of $1.5 \;\si{m/s}$ and gusts up to $4.0 \;\si{m/s}$. For this test, surfacing was forced to occur at the end of the mission to facilitate comparison. Additionally, the wind conditions generated minimal surface waves, which allowed for a more reliable GPS signal while the vehicle was on the surface. As a result, a more stable overall performance can be observed for all the controllers.

Similar to the previous run, Fig. \ref{fig:field_comparison_1113_horizontal} shows faster convergence of the AFxTLOS controller, closely followed by TVLD-AFxTLOS, which also exhibits steady performance but slightly slower convergence during the transitions between the first and second, and second and third mission legs. An oscillatory behavior and a slight steady-state cross-track error can be observed for the ALOS controller, which is not present in AFxTLOS and TVLD-AFxTLOS, {whose average tracking errors in the horizontal plane are $1.16\:\si{m}$ and $1.25\:\si{m}$ while ALOS has $1.32\:\si{m}$}. Consistent with the results of the previous run, TVLD-AFxTLOS produces smoother transitions; however, this smoothness results in slightly slower convergence. This behavior is mainly related to the configuration of $\Delta_{h_{min}}$, which matches the fixed $\Delta_h$ used in AFxTLOS and ALOS, effectively bounding its responsiveness.

Regarding the depth profile, AFxTLOS exhibits the steadiest overall descent toward the commanded $2\:\si{m}$ depth, although oscillatory behavior is visible during the second leg. In contrast, ALOS shows a steadier second leg performance with smaller oscillations, but with a slightly slower convergence to the commanded depth. Consistent with its horizontal-plane behavior, TVLD-AFxTLOS presents a smooth initial descent, followed by moderate oscillations during the mid-depth segment. These oscillations appear to be associated with variations in the look-ahead dynamics during the underwater leg, which slightly relax the convergence rate before stabilizing near the commanded depth.

\begin{figure*}[!ht]
  \centering
    \begin{subfigure}[t]{0.48\textwidth}
        \centering
        \includegraphics[width=1.0\columnwidth]{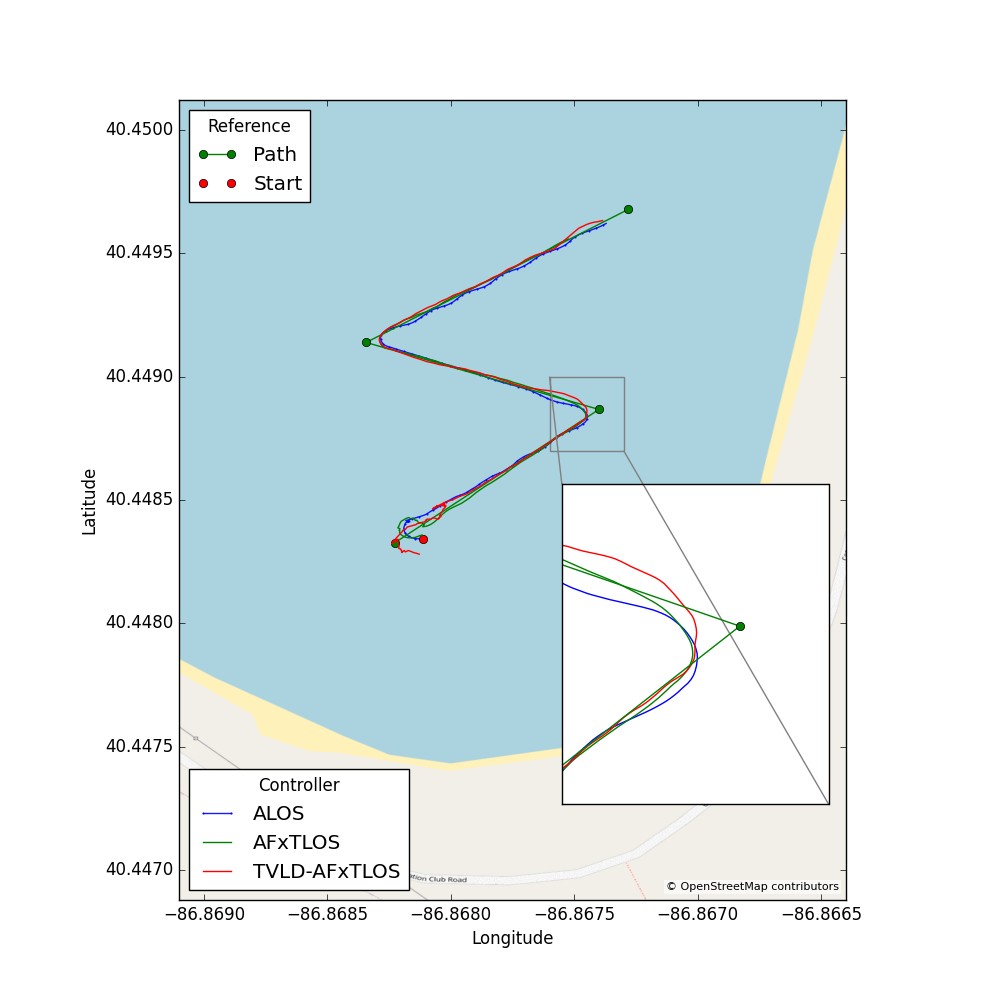}
        \caption{Horizontal trajectories}
        \label{fig:field_comparison_1113_horizontal}
    \end{subfigure}
    \hfill
    \begin{subfigure}[t]{0.48\textwidth}
        \centering
        \includegraphics[width=1.0\columnwidth]{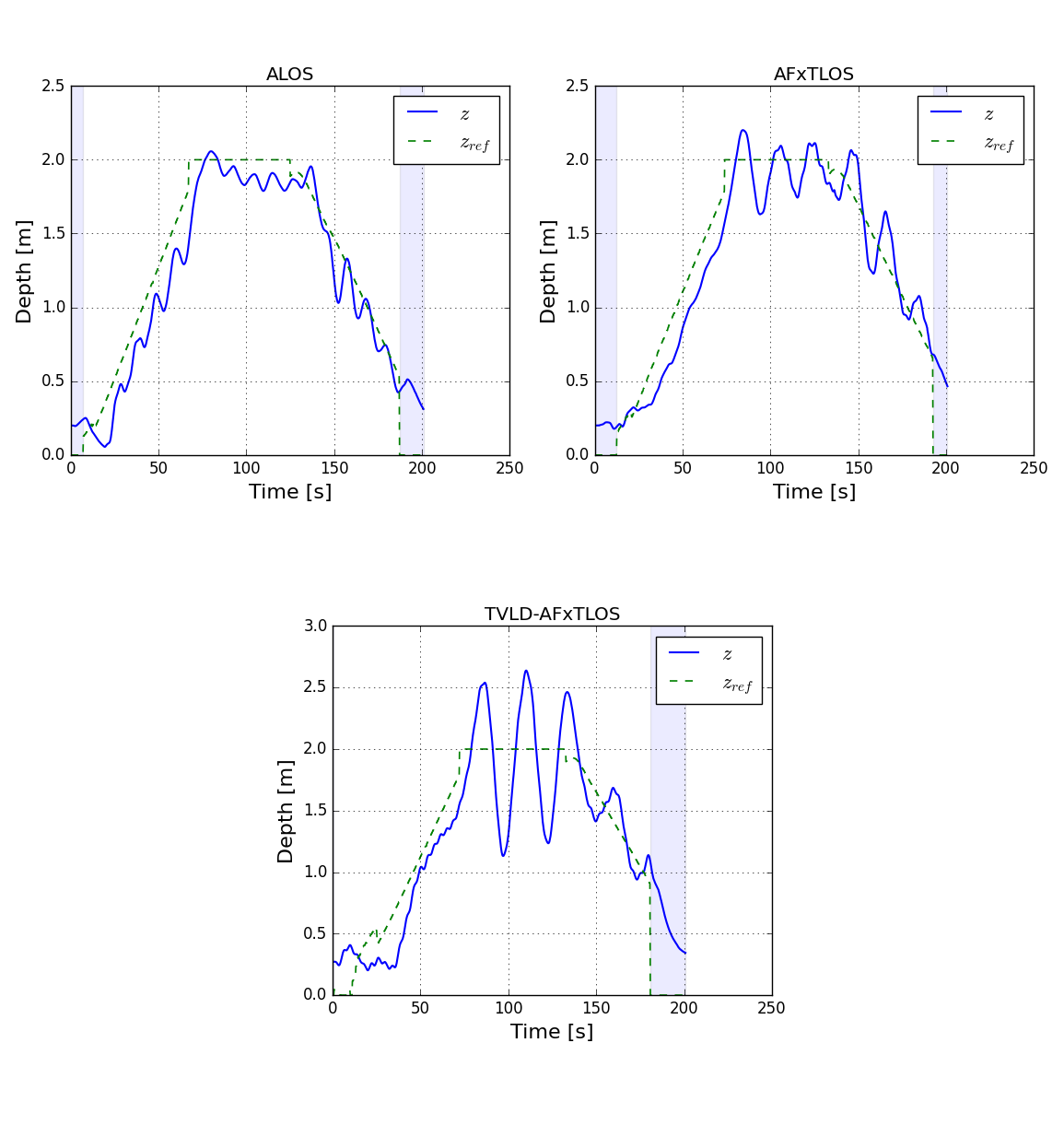}
        \caption{Depth profiles}
        \label{fig:field_comparison_1113_vertical}
    \end{subfigure}
    \caption{Field test under light air conditions. Left: Horizontal path-following comparison for ALOS (blue), AFxTLOS (green), and TVLD-AFxTLOS (red). AFxTLOS converges fastest, TVLD-AFxTLOS shows smoother but slightly slower transitions, and ALOS exhibits slower convergence with minor oscillations. Right: Depth profiles for the same mission. AFxTLOS achieves the fastest descent with increased oscillations, ALOS provides a steadier but slower response, and TVLD-AFxTLOS shows an intermediate.}
    \label{fig:field_comparison_1113}
\end{figure*}

The corresponding crab angle behavior is shown in Fig. \ref{fig:crab_1113}. Compared to Day 1, the adaptive estimates exhibit reduced magnitude and fewer abrupt variations, reflecting milder environmental disturbances and improved navigation consistency. In the vertical plane, $\hat{\alpha}$ remains bounded and smooth across all controllers, capturing the dominant trend of $\alpha$ while attenuating high-frequency fluctuations. Compared to AFxTLOS and TVLD-AFxTLOS, ALOS does not capture these variations and exhibits a slight bias with respect to the average value of the measured $\alpha$.

In the horizontal plane, $\hat{\beta}$ exhibits significantly reduced fluctuations compared to Day 1 and remains within a narrower range. This behavior indicates smaller variations in the cross-track error $Y_e$, likely due to reduced dead-reckoning drift and improved GPS corrections under calm surface conditions. Similar to the behavior observed on Day 1, ALOS saturates and subsequently fails to track the overall trend of $\beta$.

The TVLD-AFxTLOS strategy again produces smoother adaptive behavior compared to AFxTLOS, achieving a balance between smoothness and convergence to the measured value. This behavior can be attributed to the time-varying look-ahead mechanism, which moderates the influence of variations in $Y_e$ on the adaptation law, resulting in more stable estimates.

\begin{figure*}[!ht]
  \centering
    \begin{subfigure}[t]{0.48\textwidth}
        \centering
        \includegraphics[width=1.0\columnwidth]{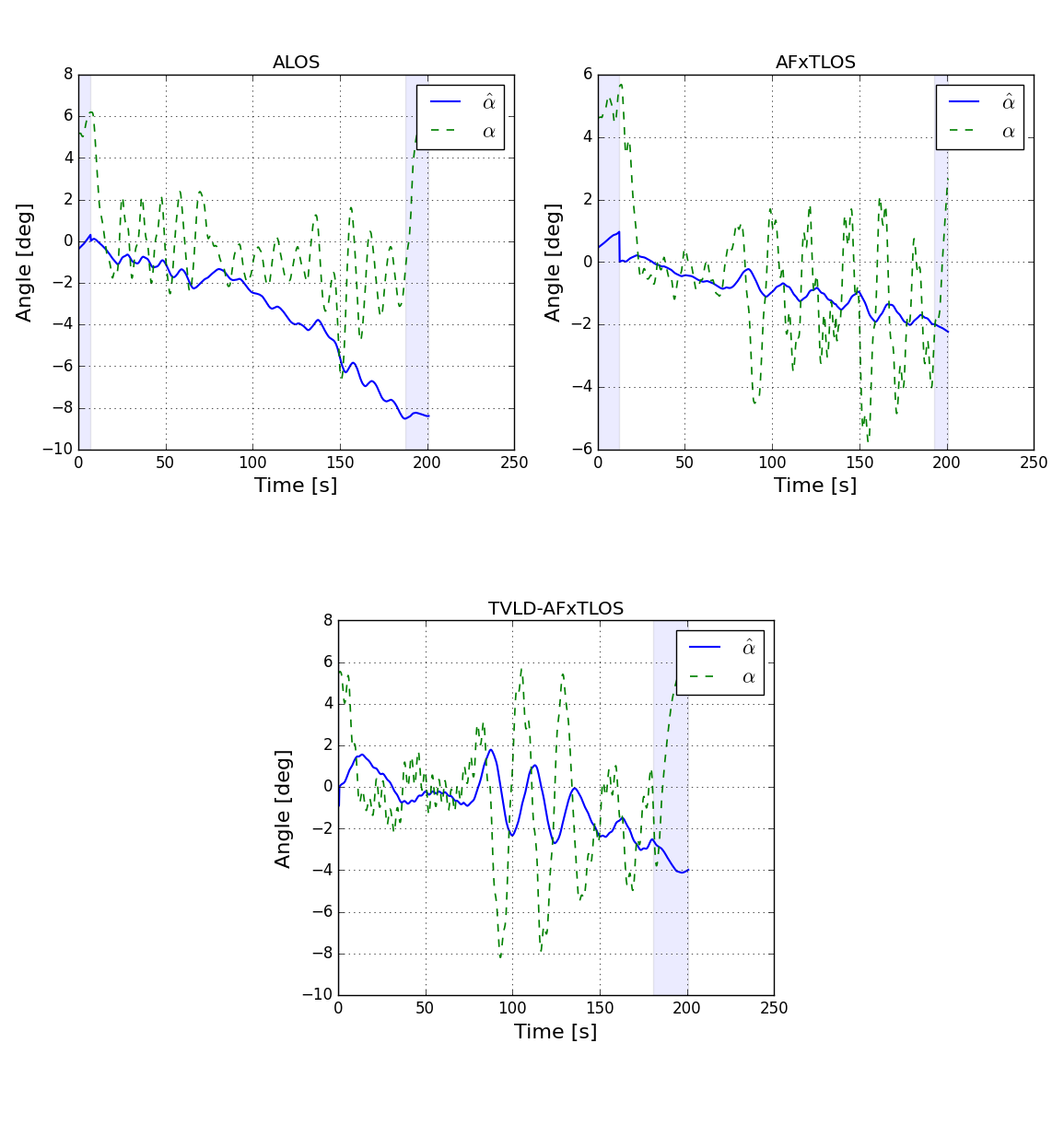}
        \caption{Vertical crab angle}
        \label{fig:crab_alpha_1113}
    \end{subfigure}
    \hfill
    \begin{subfigure}[t]{0.48\textwidth}
        \centering
        \includegraphics[width=1.0\columnwidth]{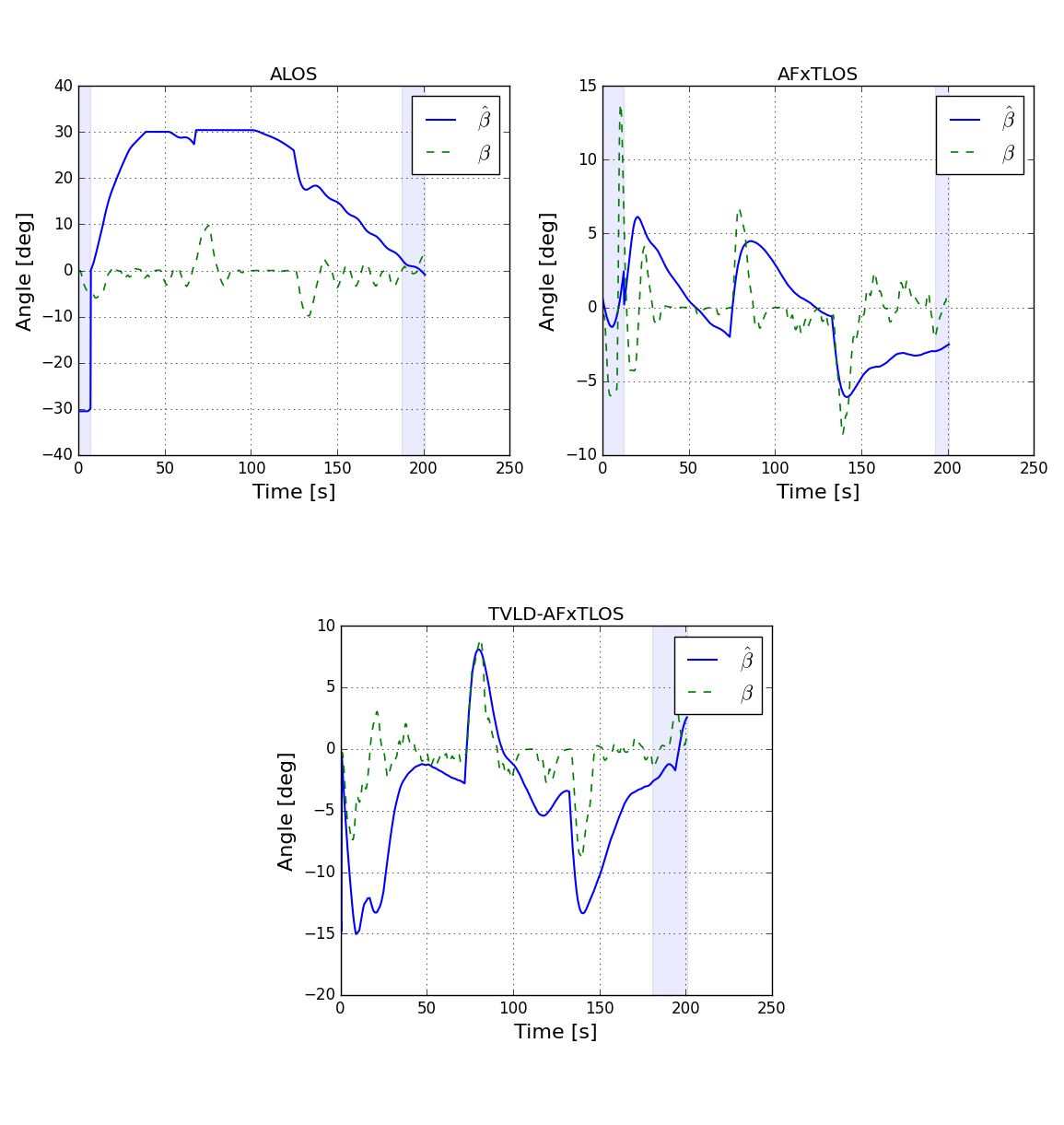}
        \caption{Horizontal crab angle}
        \label{fig:crab_beta_1113}
    \end{subfigure}
    \caption{(a): Comparison of vertical crab angles $\alpha$ (dashed green line) vs $\hat{\alpha}$ (blue line) for ALOS, AFxTLOS, and TVLD-AFxTLOS in Day 2. (b): Comparison of horizontal crab angles $\beta$ (dashed green line) vs $\hat{\beta}$ (blue line) for ALOS, AFxTLOS, and TVLD-AFxTLOS in Day 2.}
    \label{fig:crab_1113}
\end{figure*}

\subsubsection{Aggregate performance comparison}

To evaluate overall performance under different environmental conditions, statistical analysis was performed on all field operation data. Each controller performed two tasks, each task consisting of four trajectory runs, for a total of eight independent runs per controller. The comparative analysis covered horizontal and vertical tracking errors, and disturbance estimation performance.

Over two days of testing, the AFxTLOS-based method consistently reduced tracking error compared to ALOS. This improvement was particularly significant under moderate disturbances because the fixed-time convergence characteristic allowed the system to recover more quickly after encountering environmental disturbances or making navigation corrections. Overall, compared to ALOS, the proposed controller reduced the average tracking error by 56.35\% in straight-path evaluation and by 27.59\% in curved-path evaluation.

TVLD-AFxTLOS achieved the best balance between tracking accuracy and transient smoothness. While AFxTLOS achieved the fastest convergence, the additional variable look-ahead mechanism reduced overshoot during aggressive maneuvers and produced smoother disturbance adaptation.

\subsubsection{Discussion}

The field experiments highlight the practical scenarios where fixed-time guidance provides advantages over classical ALOS. For routine tasks with minimal interference and relatively relaxed accuracy requirements, ALOS remains attractive due to its simple structure and small control amplitude. However, for safety-critical operations requiring rapid recovery from interference (such as autonomous docking, close-range structural inspection, or navigation in confined environments), AFxTLOS can limit transient tracking deviations to predictable time intervals, thus providing higher reliability.

The proposed method introduces limited computational overhead because the fixed-time guidance law only relies on explicit error transformation and low-complexity adaptive updates. Unlike optimization-based guidance methods, this method does not require an online numerical solver, making real-time execution possible on the Iver3 backend computer. The algorithm operates at a frequency of $5 \si{Hz}$ without exhibiting a significant computational bottleneck.

The performance depends on appropriate tuning of convergence gains and look-ahead parameters. While increasing the fixed-time gain can enhance disturbance rejection and improve convergence speed, it can also exacerbate control actions and increase potential energy consumption; the TVLD-AFxTLOS mechanism provides additional flexibility by suppressing drastic responses when large transient errors occur.

Under strong environmental disturbances, the adaptive law can compensate for large sideslip changes within the designed operating range. However, the impact of degraded GPS signal quality on guidance performance must be considered, especially under the complex water surface conditions encountered during the test on Day 1. When the AUV resurfaces, unreliable GPS signals and delayed position corrections introduce abrupt, artificial step-changes in the computed tracking errors. Because the proposed fixed-time adaptation laws are designed for high responsiveness, they react sharply to these discrete localization jumps, resulting in parameter chattering and transient over-corrections in the control effort (as observed in the large excursions of $\hat{\beta}$). Feasible solutions to these problems include using an underwater acoustic positioning network \cite{UALN} to provide continuous underwater positioning feedback and applying Kalman filtering to obtain more reliable positioning estimates.Furthermore, a fault-tolerant mechanism can be added to the proposed LOS controller to reduce the influence of unreliable or delayed positioning signals and also practical actuator limitations such as saturation and hardware wear. This will be included in future works.

\section{Conclusion}\label{Conclusion}
{This paper proposes a robust AFxTLOS guidance strategy for 3D AUV path tracking. This method incorporates fixed-time stability theory, enabling rapid estimation and compensation of time-varying sideslip angles in strong current environments. Theoretical analysis and experimental results demonstrate that this method improves tracking performance, reducing the average tracking error by up to 69\% and the drift estimation error by 42\%. Field experiments with an Iver 3 AUV further validate the proposed approach, showing reductions in average tracking error of 56.35\% in straight-path evaluation and 27.59\% in curved-path evaluation compared with conventional ALOS guidance.}

{
Through field trials, this study also demonstrates that fixed-time guidance technology can be practically deployed on commercial underwater vehicles without modifying the existing underlying control architecture. This framework combines bounded-time convergence characteristics, an adaptive disturbance compensation mechanism, and field validation in real-world environments, thereby improving the reliability of autonomous underwater navigation in uncertain marine environments. These characteristics make this method particularly suitable for tasks such as long-duration inspections, environmental monitoring, and autonomous docking, where predictable transient performance is crucial.}

{
One drawback of this method is that the controller can cause more drastic attitude adjustments, potentially increasing energy consumption. Therefore, it is necessary to develop an energy-efficient adaptive guidance strategy based on existing research to balance tracking accuracy and control costs. This method will be tested in waters with stronger currents to further verify its robustness. Furthermore, future research will develop a multi-AUV cooperative control strategy based on AFxTLOS and introduce a dynamic obstacle avoidance mechanism to ensure the safety of AUV fleet operation.}


\appendices
\section{Proof of Lemma \ref{lemma: stability no adaptation}}\label{app: proof no adaptation}


    We first prove that the guidance laws lead to fixed-time stability if the crab angles are perfectly estimated, i.e. $\hat\alpha = \alpha$ and $\hat\beta = \beta$. In such a case, the error dynamics \eqref{eq: proof-error-1} and \eqref{eq: proof-error-2} can be simplified as
    \begin{align}
    \label{eq:proof-error-simple-1}
        \dot{Y}_e &= -U_h\frac{k_1\sigmu{Y_e}}{\sqrt{\Delta_h^2+k_1^2\sigmu{Y_e}^2}} \\ 
    \label{eq:proof-error-simple-2}
        \dot{Z}_e &= -U_v\frac{k_2\sigmu{Z_e}}{\sqrt{\Delta_v^2+k_2^2\sigmu{Z_e}^2}}+d(0,Y_e)
    \end{align}
    Then define a Lyapunov function candidate 
    \begin{equation}
        V_1=|Y_e|+|Z_e|
    \end{equation}
    whose derivative is
    \begin{equation}
    \begin{split}
        \dot{V}_1 &= \sign(Y_e)\dot{Y}_e + \sign(Z_e)\dot{Z}_e \\ 
        &= -\Omega_1k_1\sign(Y_e)\Bigl(|Y_e|^{1+1/\mu}\sign(Y_e)+|Y_e|\sign(Y_e)\\&\quad\; +|Y_e|^{1-1/\mu}\sign(Y_e)\Bigr)  -\Omega_2k_2\sign(Z_e)\Bigl(|Z_e|^{1+1/\mu}\cdot\\&\quad\;\cdot\sign(Z_e)+|Z_e|\sign(Z_e)+|Z_e|^{1-1/\mu}\sign(Z_e)\Bigr)  \\&\quad\;+ \sign(Z_e)d(0,Y_e) \\
        &= -\Omega_1k_1\Bigl(|Y_e|^{1+1/\mu}+|Y_e| +|Y_e|^{1-1/\mu}\Bigr) \\&\quad\;-\Omega_2k_2\Bigl(|Z_e|^{1+1/\mu}+|Z_e|+|Z_e|^{1-1/\mu}\Bigr)\\&\quad\;+\sign(Z_e) d(0,Y_e) 
    \end{split}
    \end{equation}
    where $\Omega_1={U_h}/{\sqrt{\Delta_h^2+k_1^2\sigmu{Y_e}^2}}$ and $\Omega_2={U_v}/{\sqrt{\Delta_v^2+k_2^2\sigmu{Z_e}}}$.     According to the definition of amplitude speeds and Assumptions  \ref{assum1} and \ref{assum3}, $U_h\geq0$ and $U_v>0$. And since the controller coefficients $k_1, k_2$ are positively definite, $\Omega_1\geq0$ and $\Omega_2>0$. Additionally, Assumptions \ref{assum1} and \ref{assum3} indicate that $U_v$ and $U_h$ is bounded, i.e. $U_{v, min}\leq U_v\leq U_{v,max}$ and $U_{h, min}\leq U_h\leq U_{h,max}$. Therefore, $\Omega_1$ and $\Omega_2$ are also bounded: $0<\Omega_1\leq U_{h,max}/\Delta_h$, $0<\Omega_2\leq U_{v,max}/\Delta_v$. Then, 
    \begin{equation}\begin{split}
        \dot{V}_1&\leq -\frac{k_1U_{h,min}}{\Delta_h}\Bigl(|Y_e|^{1+1/\mu} +|Y_e|^{1-1/\mu}\Bigr) \\&\quad\; -\frac{k_2U_{v,min}}{\Delta_v}\Bigl(|Z_e|^{1+1/\mu}+|Z_e|^{1-1/\mu}\Bigr) + |d(0,Y_e)| \\ 
        &\leq -\min\Bigl(\frac{k_1U_{h,min}}{\Delta_h}, \frac{k_2U_{v,min}}{\Delta_v} \Bigr)\Bigl( |Y_e|^{1+1/\mu} \\&\quad\; + |Z_e|^{1+1/\mu} + |Y_e|^{1-1/\mu} + |Z_e|^{1-1/\mu} \Bigr)+ |d(0,Y_e)| 
    \end{split}\end{equation}
    According to Jensen's inequality \cite{JensenInequlity}, $|Y_e|^{1+1/\mu}+|Z_e|^{1+1/\mu}\geq2^{1-(1+1/\mu)}(|Y_e|+|Z_e|)^{1+1/\mu}$ and $|Y_e|^{1-1/\mu}+|Z_e|^{1-1/\mu}
    \geq(|Y_e|+|Z_e|)^{1-1/\mu}$. Thus, 
    \begin{equation}\begin{split}
        \dot{V}_1&\leq-\min\Bigl(\frac{k_1U_{h,min}}{\Delta_h}, \frac{k_2U_{v,min}}{\Delta_v} \Bigr)\cdot\\&\quad\;\cdot\Bigl(2^{-1/\mu}V_1^{1+1/\mu}+V_1^{1-1/\mu}\Bigr) + |d(0,Y_e)| 
    \end{split}\end{equation}
    If the perturbation is zero, then by Lemma \ref{lemma2}, the system is fixed-time stable. Otherwise, since $0\leq|d(0,Y_e)|\leq U_{h,min}<\infty$ is finite, according to Lemma \ref{lemma3}, the origin of the system consisting of \eqref{eq:proof-error-simple-1} and \eqref{eq:proof-error-simple-2} is practical fixed-time stable. In addition, it can be seen that the subsystem \eqref{eq:proof-error-simple-1} is fixed-time stable according to Lemma \ref{lemma2} because $\sign(Y_e)\dot{Y}_e\leq -{k_1U_{h,max}}/{\Delta_h}\cdot\Bigl(|Y_e|^{1+1/\mu}+|Y_e|^{1-1/\mu}\Bigr)$. Hence, the subsystem \eqref{eq:proof-error-simple-2} can converge near the origin in fixed-time when $Y_e$ converges to zero.    This completes the proof of Lemma \ref{lemma: stability no adaptation}. 

\section{Proof of Lemma \ref{lemma: stability adaptation}}\label{app: proof adaptation}
To further prove the convergence of the proposed adaptation law, here we consider another Lyapunov function candidate 
\begin{equation}\begin{split}
    V_2 &= \int_0^{Y_e}{\sigmu{\tau}}d\tau+\frac{U_h}{\gamma_hk_1}(1-\cos(\tilde\beta)) \\&\quad\; + \int_0^{Z_e}{\sigmu{\tau}}d\tau+\frac{U_v}{\gamma_vk_2}(1-\cos(\tilde\alpha))
\end{split}\end{equation}
and its derivative yields
\begin{equation}
    \dot{V}_2 = \sigmu{Y_e}\dot{Y}_e+\frac{U_h}{\gamma_hk_1}\sin(\tilde\beta)\dot{\tilde\beta}+\sigmu{Z_e}\dot{Z}_e+\frac{U_v}{\gamma_vk_2}\sin(\tilde\alpha)\dot{\tilde\alpha}
\end{equation}
Based on Assumption \ref{assum2}, $\dot{\tilde\alpha}=-\dot{\hat\alpha}$ and $\dot{\tilde\beta}=-\dot{\hat\beta}$. Then, 
\begin{equation}\begin{split}
    \dot{V}_2 &=  - \sigmu{Y_e}U_h\Delta_h\Omega_4\Biggl[ \cos(\tilde\beta)\frac{k_1\sigmu{Y_e}}{\Delta_h}-\sin(\tilde\beta) \Biggr] \\&\quad\; - \frac{U_h}{k_1}\sin(\tilde\beta)\Delta_h\Omega_4\proj{(\hat\beta, k_1\sigmu{Y_e})}  \\&\quad\;- \sigmu{Z_e}U_v\Delta_v\Omega_3\biggl[ \cos(\Tilde{\alpha})\frac{k_2\sigmu{Z_e}}{\Delta_v}-\sin(\Tilde{\alpha}) \biggr]\\&\quad\; -\frac{U_v}{k_2}\sin(\tilde\alpha)\Delta_v\Omega_3\proj{(\hat\alpha, k_2\sigmu{Z_e})}  \\
    &= - U_h\Omega_4\cos(\tilde\beta)k_1\sigmu{Y_e}^2+U_h\Delta_h\Omega_4\sin(\tilde\beta)\sigmu{Y_e} \\&\quad\;-U_v\Omega_3\cos(\tilde\alpha)k_2\sigmu{Z_e}^2+U_v\Delta_v\Omega_3\sin(\tilde\alpha)\sigmu{Z_e} \\&\quad\; - U_h\sin(\tilde\beta)\Delta_h\Omega_4\sigmu{Y_e}\cdot{Q_Y} \\&\quad\; - U_v\sin(\tilde\alpha)\Delta_v\Omega_3\sigmu{Z_e}\cdot{Q_Z}
\end{split}\end{equation}
 where $\Omega_3 = {1}/{\sqrt{\Delta_v^2+k_2^2(\lceil{Z_e,\mu}\rfloor)^2}}$, \\ $\Omega_4 = {1}/{\sqrt{\Delta_h^2+k_1^2(\lceil{Y_e,\mu}\rfloor)^2}}$,\\ $Q_Y = \begin{cases}
     1-c(\hat\beta), & if\:|\hat\beta|>M_\beta\:\&\:\hat\beta k_1\sigmu{Y_e}>0 \\
     1, & else
 \end{cases}$ \\ and $Q_Z = \begin{cases}
     1-c(\hat\alpha), & if\:|\hat\alpha|>M_\alpha\:\&\:\hat\alpha k_2\sigmu{Z_e}>0 \\
     1, & else
 \end{cases}$. 
 According to the parameter projection function \eqref{eq: AFxTLOS-projection}, the definition of $c(\cdot)$ indicates that $0<c(\hat\beta)\leq1$. If $\hat\beta>M_\beta$, then $\beta-\hat\beta=\tilde\beta<0$ and thus $\sin(\tilde\beta)<0$; since in this case it should follows $\sigmu{Y_e}>0$ to have $\hat\beta^T\sigmu{Y_e}>0$, it can be concluded that $c(\hat\beta)\sin(\tilde\beta)\sigmu{Y_e}<0$. 
 If $\hat\beta<-M_\beta$, then $\beta-\hat\beta=\tilde\beta>0$ and thus $\sin(\tilde\beta)>0$; since in this case it should follows $\sigmu{Y_e}<0$ to have $\hat\beta^T\sigmu{Y_e}>0$, it can be concluded that that $c(\hat\beta)\sin(\tilde\beta)\sigmu{Y_e}<0$. Therefore, it is proved to guarantee that $c(\hat\beta)\sin(\tilde\beta)\sigmu{Y_e}<0$ and $c(\hat\alpha)\sin(\tilde\alpha)\sigmu{Z_e}<0$. As a result, 
 \begin{equation}\begin{split}
     \dot{V}_2&\leq -U_h\Omega_4\cos(\tilde\beta)k_1\sigmu{Y_e}^2-U_v\Omega_3\cos(\tilde\alpha)k_2\sigmu{Z_e}^2\\&\quad\;+c(\hat\beta)U_h\Delta_h\Omega_4\sin(\tilde\beta)\sigmu{Y_e}\\&\quad\;+c(\hat\alpha)U_v\Delta_v\Omega_3\sin(\tilde\alpha)\sigmu{Z_e}\\
     &\leq -U_h\Omega_4\cos(\tilde\beta)k_1\sigmu{Y_e}^2-U_v\Omega_3\cos(\tilde\alpha)k_2\sigmu{Z_e}^2
 \end{split}\end{equation}
Therefore, the system is stable, and the convergence of tracking and estimation errors is guaranteed in the presence of zero perturbation.  This completes the proof of Lemma \ref{lemma: stability adaptation}.

\section{Proof of Theorem \ref{theorem: convergence}}
\label{proof: convergence}

According to the definitions of $M_\vartheta$ and $M_{\hat{\vartheta}}$, the estimates and true values of the crab angles are finitely bounded, so based on Lemma \ref{lemma2}, the system consisting of the control law and adaption law is practical fixed-time stable and the tracking error converges to the residual set as defined in \eqref{eq: fixed residual set} before the crab angle estimations are fully converged. 
As a result, based on Lemmas \ref{lemma: stability no adaptation} and \ref{lemma: stability adaptation}, with the finite estimation and tracking errors, the overall path following adaptive control system can converge within a finite residual set around the origin in fixed-time. 
This completes the proof of Theorem \ref{theorem: convergence}.

\bibliographystyle{IEEEtran}
\bibliography{ifacconf}

\vfill\pagebreak

\end{document}